\theoremstyle{plain}   
\newtheorem{thm}{Theorem}[section]   
\newtheorem{lem}{Lemma}[section]   
\newtheorem{assumption}{Assumption}[section]
\theoremstyle{definition}
\theoremstyle{remark}   
\newtheorem{rem}{Remark}[section]
\newcommand{\Prob}{\operatorname{P}}
\newcommand{\Var}{\operatorname{Var}}
\newcommand{\E}{\operatorname{E}}
\newcommand{\argmin}{\operatorname{arg\,min}}
\newcommand{\eqdis}{\stackrel{\lower0.2ex\hbox{$\scriptscriptstyle   
                    \mathrm{d}$}}{=}}
\definecolor{cadmiumgreen}{rgb}{0.0, 0.42, 0.24}                    
\title[Convergence of SARSA with linear function approximation]{Convergence of SARSA with linear function approximation: The random horizon case}
\author{Lina Palmborg}
\date{May 15, 2023}                                           
\email{lina.palmborg@math.su.se}
\thanks{Department of Mathematics, Stockholm University, Box 106 91, Stockholm, Sweden}
\begin{document}
\maketitle

\begin{abstract}
The reinforcement learning algorithm SARSA combined with linear function approximation has been shown to converge for infinite horizon discounted Markov decision problems (MDPs). 
In this paper, we investigate the convergence of the algorithm for random horizon MDPs, which has not previously been shown. We show, similar to earlier results for infinite horizon discounted MDPs, that if the behaviour policy is $\varepsilon$-soft and Lipschitz continuous with respect to the weight vector of the linear function approximation, with small enough Lipschitz constant, then the algorithm will converge with probability one when considering a random horizon MDP. 

\end{abstract}

\bigskip 

\noindent

\bigskip

\section{Introduction}
In reinforcement learning, an agent learns by interacting with an environment, and adjusts its behaviour (or policy) based on rewards received. The goal is to behave in such a way that the expected total rewards over the time horizon considered are maximised.
Central to most reinforcement learning algorithms is the estimation of values, in terms of either a state-value function representing the expected total reward of each state, or an action-value function representing the expected total reward of each action and state. 

Temporal difference (TD) learning algorithms are among the most popular reinforcement learning algorithms. In the tabular case (when the state and action set are small enough so that the values in each state or each state-action pair can be stored as tables) these type of algorithms have been shown to converge with probability one, both when estimating the value function given a specific policy \cite{dayan1992convergence}, and in TD control algorithms such as SARSA \cite{singh2000convergence} and Q-learning \cite{watkins1992q}. However, when the state space becomes large, tabular solution methods are no longer feasible. In this case, the algorithms need to be combined with function approximation when estimating the value (or action-value) function. 

However, when combining these methods with function approximation, 
convergence results have proven more difficult to obtain, 
even for algorithms using linear function approximation. In fact, there are examples of divergence of off-policy algorithms (such as Q-learning) when combined with linear function approximation in the literature, see e.g.\ \cite{tsitsiklis1997analysis}. 
Semi-gradient TD learning methods, in which the value function for a specific given policy is estimated using linear function approximation, have been shown to converge with probability one, see e.g.\ \cite{dayan1992convergence,tsitsiklis1997analysis}. 
As for SARSA combined with linear function approximation, some convergence results where obtained in the 2000s, for the case of infinite horizon discounted Markov decision problems (MDPs).  

To begin, de Farias \& Van Roy \cite{de2000existence} considered an infinite horizon discounted MDP and showed that a variant of approximate policy iteration, motivated by TD learning algorithms, using linear function approximation and a softmax behaviour policy, is guaranteed to possess at least one fixed point. The continuity of the behaviour policy with respect to the weight vector of the function approximation was needed for this result. However, convergence results were not obtained. 
Next, Gordon \cite{gordon2001reinforcement} considered a random horizon MDP, and showed that SARSA combined with linear function approximation will converge to a region, when using an $\varepsilon$-greedy behaviour policy, and that the parameters of the function approximation (and hence the derived policy) might oscillate in that region. Since an $\varepsilon$-greedy policy is discontinuous in the action-values (and thus the weight vector of the approximation), the results of \cite{gordon2001reinforcement} and \cite{perkins2003convergent} gave an indication that the continuity of the behaviour policy with respect to the weight vector of the function approximation might be of importance for providing a convergence result. 
This was further explored by Perkins \& Precup \cite{perkins2003convergent}, who considered an infinite horizon discounted MDP. They showed that a variant of SARSA combined with linear function approximation converges to a unique policy, under the condition that the behaviour policy used is $\varepsilon$-soft and Lipschitz continuous w.r.t.\ the weight vector of the function approximation with a sufficiently small Lipschitz constant. However, in the variant of SARSA that they used, the behaviour policy was only updated based on the new action-values after the weight vector had converged, i.e.\ not in an online fashion. Hence, this version of the algorithm is likely to converge slowly in practice. Melo et al.\ \cite{melo2008analysis}, who again considered an infinite horizon discounted MDP, later extended the results from \cite{perkins2003convergent} to the case when the policy is updated after each iteration, i.e.\ the standard online version of SARSA with linear function approximation commonly used in practice. 

However, to date, similar results have not been shown for random horizon MDPs. The results in \cite{de2000existence,perkins2003convergent,melo2008analysis} are obtained for infinite horizon MDPs, and all use the assumption that the Markov chain induced by any policy is irreducible and aperiodic \cite{de2000existence,perkins2003convergent} or uniformly ergodic \cite{melo2008analysis}. For a random horizon MDP, the Markov chain induced by a policy is (under certain assumptions) absorbing, hence the results of \cite{de2000existence,perkins2003convergent,melo2008analysis} are not directly applicable in this case. 

We combine results and ideas from \cite{de2000existence,perkins2003convergent,melo2008analysis}, but adjusted and applied to random horizon MDPs, together with results from \cite{bertsekas1996neuro}. The algorithm studied in this paper is SARSA with linear function approximation, but where the weight vector and the policy is updated at the end of each trajectory, after reaching the absorbing state. We show that using this algorithm to solve a random horizon MDP, when the behaviour policy is $\varepsilon$-soft and Lipschitz continuous w.r.t.\ the weight vector with sufficiently small Lipschitz constant, the weight vector will converge with probability one, hence extending the results of \cite{de2000existence,perkins2003convergent,melo2008analysis} to the random horizon MDP case.

The paper is organised as follows. 
Section \ref{sec:MDP} presents the random horizon Markov decision problem and its associated state-value function and action-value function. 
Section \ref{sec:SARSA} defines the algorithm studied in this paper. 
Section \ref{sec:Convergence} presents our main convergence result, the assumptions used, and the proof of the convergence result. 
We conclude with a discussion of the results in Section \ref{sec:Discussion}.

\section{Markov decision problem}\label{sec:MDP}
We consider a random horizon (episodic) Markov decision problem, with a finite state set $\mathcal S^+$, and a finite action set $\mathcal A$. We let $\mathcal S$ denote the set of non-terminal states, hence the set of terminal (absorbing) states is given by $\mathcal S^+\setminus \mathcal S$. If the process is in state $s$ at time $t$ and the agent chooses action $a$, the process will transition to state $s'$ with probability $p(s'\mid s,a)$. After choosing action $a$, the agent receives the reward $r(s,a)$. For the case when the reward also depends on the state $s'$ at time $t+1$, we denote the reward
by $r(s,a,s')$, and let $r(s,a)$ be the expected value of the reward:
$$
r(s,a) = \sum_{s'\in\mathcal S^+} p(s'\mid s,a) r(s,a,s'). 
$$
A policy $\pi$ can be deterministic or stochastic. A deterministic policy determines what action to take in each state, while a stochastic policy assigns a probability distribution over the set of available actions to each state $s\in\mathcal S$. The probability of choosing action $a$ in state $s$ is denoted by $\pi(a\mid s)$. The Markov decision problem consists of finding the policy $\pi$ that maximises the sum of the expected rewards received
\begin{align*}
\underset{\pi}{\text{maximise}} \E_{\pi} \bigg[ \sum_{t=0}^{T-1} r(S_t,A_t,S_{t+1}) \mid S_0=s\bigg],
\end{align*}
where $S_t$ is the state at time $t$, $A_t$ is the action taken at time $t$, $T$ denotes the terminal time, i.e. $T:=\min\{t:S_t\in\mathcal S^+\setminus\mathcal S\}$, and $\E_{\pi}[\cdot]$ denotes the expectation given that policy $\pi$ is used. The state-value function, $v_{\pi}$, and the action-value function, $q_{\pi}$, under policy $\pi$ are defined as
\begin{align*}
v_{\pi}(s) &= \E_{\pi} \bigg[ \sum_{t=0}^{T-1} r(S_t,A_t,S_{t+1}) \mid S_0=s\bigg],\\
q_{\pi}(s,a) &= \E_{\pi} \bigg[ \sum_{t=0}^{T-1} r(S_t,A_t,S_{t+1}) \mid S_0=s,A_0=a\bigg].
\end{align*}

Note that a policy $\pi$ can be seen as a matrix, with $|\mathcal S|$ rows and $|\mathcal A|$ columns, where each row sums to one. Here we view $\pi$ as a vectorised version of this matrix, i.e.\ $\pi$ is an element of $\mathbb{R}^{|\mathcal S||\mathcal A|}$. 
Let $\Delta_{\varepsilon}$ denote the set of $\varepsilon$-soft policies,
\begin{align*}
\Delta_{\varepsilon} = \Big\{\pi\in\mathbb{R}^{|\mathcal S||\mathcal A|}:\sum_{a} \pi(a\mid s) = 1 \text{ for all } s, \pi(a\mid s)\geq \varepsilon \text{ for all } (s,a)\Big\}. 
\end{align*}
Note that $\Delta_{\varepsilon}$ can be viewed as a compact subset of $\mathbb{R}^{|\mathcal S||\mathcal A|}$, since it is closed and bounded. 

A policy $\pi$ is said to be proper if the Markov chain induced by $\pi$ reaches the terminal state with probability one, irrespective of starting state, see further \cite[Def~2.1]{bertsekas1996neuro}. 

\section{SARSA with linear function approximation} \label{sec:SARSA}

The algorithm we consider is SARSA with linear function approximation. Hence the action-value function is approximated by a parameterised function $\hat q(\cdot;\theta)$ which is a linear function of the weight vector $\theta\in\mathbb{R}^d$:
\begin{align*}
\hat q(s,a;\theta) = \phi(s,a)^\top\theta,
\end{align*}
where $\phi(s,a)$ are basis functions. We let $\Phi\in\mathbb{R}^{|\mathcal S||\mathcal A|\times d}$ denote the matrix whose $(s,a)$th row is $\phi(s,a)^\top$. 

Suppose that $(S_t)$ and $(A_t)$ are sampled trajectories of states and actions, according to some behaviour policy $\pi$. The update equation for the weight vector using SARSA with linear function approximation is then
\begin{align*}
\theta_{t+1} = \theta_t + \alpha_{t+1}\phi(S_t,A_t)(r(S_t,A_t,S_{t+1})+\phi(S_{t+1},A_{t+1})^\top\theta_t - \phi(S_t,A_t)^\top\theta_t),
\end{align*}
with the convention that $\phi(S_t,\cdot) = 0$ when $S_t\in\mathcal S^+\setminus \mathcal S$, where $\alpha_t$ is the step-size parameter. We consider a slightly modified version of SARSA, where the weight vector is only updated at the end of each trajectory, when we have reached the terminal state:
\begin{align} \label{eq:update}
\theta_{t+1} = \theta_t + \alpha_{t+1}\sum_{u=0}^{T^{(t)}-1}\phi_u^{(t+1)}(r_u^{(t+1)}+(\phi_{u+1}^{(t+1)})^\top\theta_t - (\phi_u^{(t+1)})^\top\theta_t),
\end{align}
where $\phi_u^{(t)}=\phi(S_u^{(t)},A_u^{(t)})$, $r_u^{(t)}=r(S_u^{(t)},A_u^{(t)},S_{u+1}^{(t)})$, 
and $(S_u^{(t)})_{u=0}^{T^{(t)}}$ and $(A_u^{(t)})_{u=0}^{T^{(t)}-1}$ are the sampled states and actions during trajectory $t$, and $T^{(t)}$ is the time the terminal state is reached during trajectory $t$. 
Let $X_t=(S_0^{(t)},A_0^{(t)},S_1^{(t)},A_1^{(t)},\ldots,A_{T^{(t)}-1}^{(t)},S_{T^{(t)}}^{(t)})$ denote the $t$th sampled trajectory. Then \eqref{eq:update} can be written as
\begin{align*}
\theta_{t+1} = \theta_t + \alpha_{t+1}H(\theta_t,X_{t+1}),
\end{align*}
where 
\begin{align} \label{eq:H}
H(\theta_t,X_{t+1}) = \sum_{u=0}^{T^{(t+1)}-1}\phi_u^{(t+1)}(r_u^{(t+1)}+(\phi_{u+1}^{(t+1)})^\top\theta_t - (\phi_u^{(t+1)})^\top\theta_t). 
\end{align}
We further assume that the behaviour policy generating actions is updated at the end of each trajectory, and is dependent on the weight vector $\theta$. Hence the policy generating actions during trajectory $t$ will be denoted $\pi_{\theta_t}$. 
This algorithm corresponds to Algorithm \ref{Algo:A1} below. 

\begin{algorithm} 
\small
\caption{}
\begin{algorithmic}
\State Input: $\theta$-dependent policy $\pi_{\theta}$
\State Algorithm parameters: step size parameters $(\alpha_t)$
\State Initialise $\theta_0\in\mathbb{R}^d$ arbitrarily\\
\State $\pi_0=\pi_{\theta_0}$ 
\Repeat { for $t = 0,1,2,\ldots$}

	\For {$u=0,1,2,\ldots$}
		\State Simulate/observe state $S_u$
		\If {$S_u\in\mathcal S$}
			\State Choose action $A_u\sim\pi_t(\cdot|S_u)$
		\Else
			\State $T=u$
			
			\State \textbf{break}
		\EndIf
	
	\EndFor
	\State $\theta_{t+1} = \theta_t + \alpha_{t+1}\sum_{u=0}^{T-1}\phi(S_u,A_u)(r(S_u,A_u,S_{u+1}) + \phi(S_{u+1},A_{u+1})^\top \theta_t- \phi(S_u,A_u)^\top \theta_t )$
	
	\State $\pi_{t+1} =\pi_{\theta_{t+1}}$
	\Until{approximate convergence of $(\theta_t)$}
\end{algorithmic}
\label{Algo:A1}
\end{algorithm}

\section{Convergence of the algorithm}\label{sec:Convergence}

\subsection{Preliminaries}
We make the following assumptions: 
\begin{assumption} \label{Assumption:rewards}
$|r(s,a,s')|\leq r_{\max}<\infty$. 
\end{assumption}
\begin{assumption} \label{Assumption:Phi}
(i) The columns of $\Phi$ are linearly independent, (ii) $\lVert \Phi\rVert_{\infty} = \Phi_{\max} <\infty$. 
\end{assumption}
\begin{assumption} \label{Assumption:step_size}
The step-size parameters satisfy
$\sum_t \alpha_t = \infty$, $\sum_{t} \alpha_t^2<\infty$.
\end{assumption}
\begin{assumption} \label{Assumption:states}
All states in $\mathcal S$ are reachable with a positive probability, i.e.\ $\Prob_{\pi}(S_t=s)>0$ for some $t$, for all $s$ and $\pi$, where $\Prob_{\pi}(\cdot)$ denotes the probability given that policy $\pi$ is used. 
\end{assumption}
\begin{assumption} \label{Assumption:proper}
All policies are proper.
\end{assumption}

\begin{rem}\label{rem:proper}
That all policies are proper means that the Markov chain induced by any policy $\pi$ will reach the terminal state with probability one, irrespective of starting state. Naturally, this might not hold for all random horizon Markov decision problems. However, if one were to consider the discounted version of the problem, i.e.\
\begin{align*}
\underset{\pi}{\text{maximise}} \E_{\pi} \bigg[ \sum_{t=0}^{T-1} \gamma^t r(S_t,A_t,S_{t+1}) \mid S_0=s\bigg],
\end{align*}
where $\gamma<1$ is the discount factor, then discounting can be seen as a form of termination. This is due to that the problem with discounting can be seen as a problem without discounting where the state space is augmented by an additional (policy independent) absorbing state, and where the probability of reaching this new absorbing state is $1-\gamma$ from any transient state. See further \cite[Ch.~5.3]{puterman2005markov}. Based on this, the transition probability becomes $\widetilde p(s'\mid s,a) = \gamma p(s'\mid s,a)$. 
Hence, by considering the discounted version of the problem, and reformulating this in terms of the equivalent Markov decision problem augmented with an additional absorbing state, one can ensure that all policies are proper, even if the original Markov decision problem does not have this property. 
\end{rem}
Under Assumption \ref{Assumption:proper}, the Markov chain induced by any policy $\pi$ has $|\mathcal S|$ transient states, and $|\mathcal S^+\setminus \mathcal S|$ absorbing (terminal) states. We now consider the Markov chain over state-action pairs induced by a policy $\pi$. Since no action is taken in an absorbing state, we augment the set of actions with an additional action (or "no-action") $a^+$, which corresponds to "take no action". 
Under Assumption \ref{Assumption:proper} $\{(s,a^+): s\in\mathcal S^+\setminus \mathcal S\}$ is the set of absorbing states of this Markov chain over state-action pairs, and $\{(s,a): s\in\mathcal S, a\in\mathcal A\}$ is the set of transient states.  Let $P_{\pi}$ denote the $|\mathcal S||\mathcal A| \times |\mathcal S||\mathcal A|$ transition matrix of the Markov chain over state-action pairs induced by policy $\pi$ corresponding to transitions between transient states, i.e.\ the element in the $(s,a)$th row and $(s',a')$th column of $P_{\pi}$ is
\begin{align*}
p_{\pi}(s',a'\mid s,a) &= \Prob_{\pi}(S_t=s',A_t=a'\mid S_{t-1}=s,A_{t-1}=a)
=p(s'\mid s,a)\pi(a'\mid s'), 
\end{align*}
for $s,s'\in\mathcal S$ and $a,a'\in\mathcal A$. 
From \cite[Prop.~A.3]{puterman2005markov} we know that $(I-P_{\pi})^{-1}$ exists and satisfies
\begin{align*}
(I-P_{\pi})^{-1}=\sum_{k=0}^\infty P_{\pi}^k. 
\end{align*}
Furthermore, let $\lambda(s) =\Prob(S_0=s)$, and let $\eta_{\pi}$ denote the length $|\mathcal S||\mathcal A|$ vector whose $(s,a)$th element is the expected number of visits to state-action pair $(s,a)$ before absorption. Then
\begin{align} \label{eq:eta}
\eta_{\pi}(s,a) &:= \E_{\pi}\Big[\sum_{t=0}^\infty\mathbf{1}_{\{S_t=s,A_t=a\}}\Big] = \pi(a\mid s)\lambda(s) + \sum_{t=1}^\infty \Prob_{\pi}(S_t=s,A_t=a)\nonumber\\
&=\pi(a\mid s)\lambda(s) + \sum_{t=1}^\infty\sum_{s'\in\mathcal S, a'\in \mathcal A}p_{\pi}(s,a\mid s',a')\Prob_{\pi}(S_{t-1}=s',A_{t-1}=a')\nonumber\\
&=\pi(a\mid s)\lambda(s) + \sum_{s'\in\mathcal S, a'\in \mathcal A}p_{\pi}(s,a\mid s',a')\sum_{t=0}^\infty \Prob_{\pi}(S_{t}=s',A_{t}=a')\nonumber\\
&=\pi(a\mid s)\lambda(s) + \sum_{s'\in\mathcal S, a'\in \mathcal A}p_{\pi}(s,a\mid s',a')\eta_{\pi}(s',a'),
\end{align}
or, in matrix form, $\eta_{\pi}^\top = \lambda_{\pi}^\top + \eta_{\pi}^\top P_{\pi}$, hence $\eta_{\pi}^\top = \lambda_{\pi}^\top(I-P_{\pi})^{-1}$, where $\lambda_{\pi}$ is the length $|\mathcal S||\mathcal A|$ vector whose $(s,a)$th element is $\Prob_{\pi}(S_0 = s,A_0=a) = \pi(a\mid s)\lambda(s)$.
Let $D_{\pi}$ be the diagonal matrix whose diagonal is $\eta_{\pi}$, and let $r$ denote the length $|\mathcal S||\mathcal A|$ vector whose $(s,a)$th element is $r(s,a)$. 

\begin{assumption} \label{Assumption:contraction} 
At least one of the following statements holds: (i) $\lambda_{\pi}(s,a)>0$ for all $s,a$, (ii) $\sum_{s',a'}p_{\pi}(s',a'\mid s,a)<1$ for all $s,a$. 
\end{assumption}

\begin{rem}
If $\pi\in\Delta_{\varepsilon}$, then Assumption \ref{Assumption:contraction}(i) holds if $\lambda(s)=P(S_0=s) >0$ for all $s$. Furthermore, since $\sum_{s',a'}p_{\pi}(s',a'\mid s,a) = \sum_{s'} p(s'\mid s,a)\sum_{a'}\pi(a'\mid s')=\sum_{s'} p(s'\mid s,a)$, Assumption \ref{Assumption:contraction}(ii) holds if $\sum_{s'} p(s'\mid s,a)<1$ for all $s,a$, irrespective of which policy is used. Note that, similarly to Remark \ref{rem:proper}, $\sum_{s'} p(s'\mid s,a)<1$ for all $s,a$ can be achieved by instead considering the discounted version of the problem, since the augmented probabilities are then $\widetilde p(s'\mid s,a)=\gamma p(s'\mid s,a)$, hence $\sum_{s'} \widetilde p(s'\mid s,a)=\sum_{s'} \gamma p(s'\mid s,a)\leq \gamma <1$ for all $s,a$. 
\end{rem}

We make the following assumptions regarding the behaviour policy $\pi_{\theta}$:
\begin{assumption} \label{Assumption:policy}
(i) $\pi_{\theta}$ is Lipschitz continuous with respect to $\theta$, i.e.\ there exists a constant $C$ such that $\lVert \pi_{\theta} - \pi_{\theta'}\rVert \leq C \lVert \theta - \theta'\rVert$, (ii) $\pi_{\theta}$ is $\varepsilon$-soft, i.e.\ $\pi_{\theta}(a\mid s)\geq \varepsilon$ for all $s$ and $a$, for some $\varepsilon>0$. 
\end{assumption}
Note that the set of behaviour policies considered here can be viewed as a subset of $\Delta_{\varepsilon}$.
The norm denoted by $\lVert \cdot \rVert$ here corresponds to the Euclidean norm if applied to vectors, and to the spectral norm if applied to matrices. For further details on different norms and norm inequalities used throughout the paper, see Appendix \ref{app:norms}.

\subsection{Convergence theorem}

We begin by stating our main result, Theorem \ref{thm:convergence}, and then briefly go through an overview of the proof. The full proof can be found in Section \ref{sec:proof}, using results from Section \ref{sec:prel_results}. 

\begin{thm} \label{thm:convergence}
Assume that the assumptions listed in the previous section are satisfied. Then, 
for any $\varepsilon>0$, there exists $C_0>0$ such that, if $C<C_0$, the sequence $(\theta_t)$ generated by Algorithm \ref{Algo:A1} converges with probability one. 
\end{thm}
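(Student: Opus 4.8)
The plan is to read \eqref{eq:update} as a stochastic approximation recursion and analyse it by the ODE method. Writing $\mathcal F_t=\sigma(\theta_0,\dots,\theta_t)$ and noting that, conditionally on $\mathcal F_t$, the trajectory $X_{t+1}$ is generated by the frozen behaviour policy $\pi_{\theta_t}\in\Delta_\varepsilon$, a Wald/strong-Markov computation (weighting each transient state--action pair by its expected number of visits $\eta_{\pi}$ from \eqref{eq:eta}) identifies the mean field
\[
\bar H(\theta):=\E[H(\theta,X_{t+1})\mid\theta_t=\theta]=\Phi^\top D_{\pi_\theta}\big(r+P_{\pi_\theta}\Phi\theta-\Phi\theta\big)=\Phi^\top D_{\pi_\theta}r-A_{\pi_\theta}\theta,
\]
where $A_\pi:=\Phi^\top D_\pi(I-P_\pi)\Phi$, so that $\theta_{t+1}=\theta_t+\alpha_{t+1}\big(\bar H(\theta_t)+M_{t+1}\big)$ with $M_{t+1}:=H(\theta_t,X_{t+1})-\bar H(\theta_t)$ a martingale difference sequence with respect to $(\mathcal F_t)$.

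The decisive algebraic step --- and the one where the infinite-horizon arguments of \cite{de2000existence,perkins2003convergent,melo2008analysis} do not carry over --- is to show that $A_\pi$ is positive definite uniformly over $\pi\in\Delta_\varepsilon$: there is $c>0$ with $x^\top A_\pi x\ge c\lVert x\rVert^2$ for all $x\in\R^d$ and all $\pi\in\Delta_\varepsilon$. Since $P_\pi$ is only substochastic one uses, in place of a stationary distribution, the relation $\eta_\pi^\top(I-P_\pi)=\lambda_\pi^\top$ from \eqref{eq:eta}. Setting $y=\Phi x$, a Jensen/Cauchy--Schwarz estimate in the seminorm $\lVert z\rVert_{D_\pi}^2=\sum_{s,a}\eta_\pi(s,a)z(s,a)^2$ gives, when $\sum_{s',a'}p_\pi(s',a'\mid s,a)\le 1$,
\[
\lVert P_\pi y\rVert_{D_\pi}^2\le\lVert y\rVert_{D_\pi}^2-\sum_{s,a}\lambda_\pi(s,a)\,y(s,a)^2,
\]
so that Assumption \ref{Assumption:contraction} makes $P_\pi$ a strict contraction in $\lVert\cdot\rVert_{D_\pi}$; combined with $y^\top D_\pi(I-P_\pi)y\ge\lVert y\rVert_{D_\pi}\big(\lVert y\rVert_{D_\pi}-\lVert P_\pi y\rVert_{D_\pi}\big)$, Assumption \ref{Assumption:Phi}(i) (hence $\lVert y\rVert\ge\sigma_{\min}(\Phi)\lVert x\rVert$), Assumption \ref{Assumption:states} together with $\varepsilon$-softness (hence $\eta_\pi(s,a)>0$ for all $(s,a)$), and the compactness of $\Delta_\varepsilon$ with continuity of $\pi\mapsto(\eta_\pi,P_\pi)$, one extracts the uniform $c>0$; this is where the proper/absorbing-chain machinery of \cite{bertsekas1996neuro} enters. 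Consequently, for each fixed $\pi\in\Delta_\varepsilon$ the map $\theta\mapsto\Phi^\top D_\pi r-A_\pi\theta$ has the unique zero $\theta^*_\pi=A_\pi^{-1}\Phi^\top D_\pi r$, and $\sup_{\pi\in\Delta_\varepsilon}\lVert\theta^*_\pi\rVert=:R<\infty$.

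Next I would handle the self-referential behaviour policy. On $\Delta_\varepsilon$ the maps $\pi\mapsto D_\pi$, $\pi\mapsto A_\pi$ are Lipschitz (rational in $\pi$, with $(I-P_\pi)^{-1}=\sum_k P_\pi^k$ bounded there), hence so is $\pi\mapsto\theta^*_\pi$, say with constant $L$; composing with Assumption \ref{Assumption:policy}(i) shows $\theta\mapsto\theta^*_{\pi_\theta}$ is $LC$-Lipschitz and maps $\R^d$ into the ball of radius $R$. For $LC<1$, Banach's fixed point theorem gives a unique $\theta^*$ with $\theta^*=\theta^*_{\pi_{\theta^*}}$, i.e.\ $\bar H(\theta^*)=0$. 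For the limiting ODE $\dot\theta=\bar H(\theta)$ I would use $V(\theta)=\tfrac12\lVert\theta-\theta^*\rVert^2$: since $\bar H(\theta)=-A_{\pi_\theta}(\theta-\theta^*_{\pi_\theta})$ and $\theta^*=\theta^*_{\pi_{\theta^*}}$,
\[
\dot V=-(\theta-\theta^*)^\top A_{\pi_\theta}(\theta-\theta^*)+(\theta-\theta^*)^\top A_{\pi_\theta}\big(\theta^*_{\pi_\theta}-\theta^*_{\pi_{\theta^*}}\big)\le-\big(c-A_{\max}LC\big)\lVert\theta-\theta^*\rVert^2,
\]
with $A_{\max}=\sup_{\pi\in\Delta_\varepsilon}\lVert A_\pi\rVert$. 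Hence for $C<C_0:=\min\{1/L,\ c/(A_{\max}L)\}$ the point $\theta^*$ is the unique, globally asymptotically stable equilibrium of the ODE; the same computation with $V(\theta)=\tfrac12\lVert\theta\rVert^2$ gives $\langle\theta,\bar H(\theta)\rangle\le-c\lVert\theta\rVert^2+A_{\max}R\,\lVert\theta\rVert$, the inward-pointing estimate needed for a.s.\ boundedness of the iterates.

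Finally I would verify the hypotheses of an ODE-method stochastic approximation theorem in the form used in \cite{bertsekas1996neuro,melo2008analysis}. Assumption \ref{Assumption:step_size} supplies the Robbins--Monro step sizes; $(M_t)$ is a martingale difference sequence and, because all policies are proper and $\Delta_\varepsilon$ is compact, the absorption time satisfies $\sup_{\pi\in\Delta_\varepsilon}\E_\pi[T^2]<\infty$ (uniform geometric absorption), so from $\lVert H(\theta,X_{t+1})\rVert\le T^{(t+1)}\Phi_{\max}\big(r_{\max}+2\Phi_{\max}\lVert\theta\rVert\big)$ one gets $\E[\lVert M_{t+1}\rVert^2\mid\mathcal F_t]\le K(1+\lVert\theta_t\rVert^2)$; the inward-pointing estimate gives $\sup_t\lVert\theta_t\rVert<\infty$ a.s.; and the ODE has the globally asymptotically stable equilibrium $\theta^*$. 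The conclusion is that $\theta_t\to\theta^*$ with probability one. I expect the main obstacle to be the uniform positive definiteness of $A_\pi$ over $\Delta_\varepsilon$ in the substochastic/absorbing setting, together with the uniform control of the trajectory-length moments --- precisely the points at which the random-horizon problem departs from the infinite-horizon discounted one and where the results of \cite{bertsekas1996neuro} must be brought in; making the threshold $C_0$ explicit from the perturbation bound is then comparatively routine.
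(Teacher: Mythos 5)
Your proposal is correct in substance and shares the paper's skeleton: identify the mean field $h(\theta)=b_{\pi_\theta}+A_{\pi_\theta}\theta$ by weighting transitions with the expected visit counts $\eta_\pi$, establish (uniform) definiteness of $A_\pi$ from the substochastic contraction supplied by Assumption \ref{Assumption:contraction} and properness, prove Lipschitz continuity of $\pi\mapsto(D_\pi,A_\pi,b_\pi)$ on the compact set $\Delta_\varepsilon$ via the bounded fundamental matrix $(I-P_\pi)^{-1}$, and control the second moment of $H$ through the absorption time. Where you genuinely diverge is in two places. First, for the existence of $\theta^*$ solving $A_{\pi_{\theta^*}}\theta^*+b_{\pi_{\theta^*}}=0$ you apply Banach's fixed point theorem to $\theta\mapsto\theta^*_{\pi_\theta}=-A_{\pi_\theta}^{-1}b_{\pi_\theta}$ under $LC<1$; the paper instead follows de Farias \& Van Roy and gets existence from Brouwer's theorem applied to $F^\alpha_{\pi_\theta}$ (Lemmas \ref{lem:fixed_point_F}--\ref{lem:F_pi_theta_fixed_point}), which needs only continuity of $\theta\mapsto\theta_{\pi_\theta}$ and no smallness of $C$, at the price of the $H_\pi$-contraction machinery; your route is shorter, yields uniqueness for free, and is harmless here since the theorem already restricts to $C<C_0$, but you should note that it silently folds one more constraint ($LC<1$) into $C_0$ and that the Lipschitz continuity of $\pi\mapsto\theta^*_\pi$ requires the uniform bound $\lVert A_\pi^{-1}\rVert\le 1/c$, which your uniform definiteness argument does supply. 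Second, you package the conclusion as an ODE-method theorem with a Lyapunov function $V(\theta)=\tfrac12\lVert\theta-\theta^*\rVert^2$ plus a separate almost-sure boundedness step, whereas the paper verifies the hypotheses of Theorem 1, Section 5.1 of Benveniste et al.\ directly; your drift inequality $\dot V\le-(c-A_{\max}LC)\lVert\theta-\theta^*\rVert^2$ is algebraically the same estimate as the paper's stability condition \eqref{eq:stability2}, so this is a difference of citation rather than of mathematics, though to be complete you would need to name a specific stochastic approximation theorem whose hypotheses (martingale-difference noise with quadratic second-moment growth, the inward-pointing drift, Robbins--Monro steps) you have verified, since ``a.s.\ boundedness follows from the inward-pointing estimate'' is exactly the step such a theorem is invoked to justify.
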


\noindent\textbf{Proof overview:}
To prove Theorem \ref{thm:convergence}, we want to use Theorem 1, Section 5.1 in Benveniste et al.\ \cite{benveniste2012adaptive} (restated in Theorem \ref{thm:benveniste}, Section \ref{sec:benveniste}, for completeness). Hence, we need to show that the Robbins-Monro assumption \eqref{eq:robbins}, the square integrability condition \eqref{eq:moment2} and the stability condition \eqref{eq:stability2}, are satisfied. 

The Robbins-Monro assumption clearly holds for the algorithm considered, see Section \ref{sec:robbins} below. 

That the stability condition \eqref{eq:stability2} holds is shown using similar ideas to Melo et al.\ \cite{melo2008analysis}. We begin by showing that there exists $\theta^*$ such that $A_{\pi_{\theta^*}}\theta^* + b_{\pi_{\theta^*}}=0$, where $A_{\pi_{\theta}}$ and $b_{\pi_{\theta}}$ are given by \eqref{eq:A_b}. To this end, we use results from de Farias \& Van Roy \cite{de2000existence}, but adapted to the current situation, where the MDP has terminal states. See further Lemmas \ref{lem:eta_cont_pi}-\ref{lem:F_pi_theta_fixed_point}, Section \ref{sec:prel_results}. In the next step, we show that $A_{\pi_{\theta}}$ and $b_{\pi_{\theta}}$ are Lipschitz continuous in $\theta$, based on results from Perkins \& Precup \cite{perkins2003convergent}, adapted to an MDP with terminal states. This is done in Lemmas \ref{lem:P_pi}-\ref{lem:Lipschitz_A_b}, Section \ref{sec:prel_results}. Furthermore, we note that $A_{\pi_{\theta}}$ is negative definite (Lemma \ref{lem:neg_def}), as shown in Bertsekas \& Tsitsiklis \cite{bertsekas1996neuro}. Using these three results it is possible to show that the stability condition \eqref{eq:stability2} is satisfied if the Lipschitz constant $C$ in Assumption \ref{Assumption:policy}(i) is sufficiently small. This is done in Section \ref{sec:stability}. 

That the square integrability condition \eqref{eq:moment2} holds is shown by rewriting \eqref{eq:H} in a way suggested by Sutton \cite{sutton1988learning}, as described in Gordon \cite{gordon2001reinforcement}. Then, using Assumptions \ref{Assumption:rewards} and \ref{Assumption:Phi}, together with general expressions for the expectation and variance of the number of steps before being absorbed in an absorbing Markov chain, it is shown that the square integrability condition holds (see Section \ref{sec:square_int}).\\

\subsection{Theorem from Benveniste et al.} \label{sec:benveniste}
The algorithm studied is of the form
\begin{align*}
\theta_{t+1} = \theta_t + \alpha_{t+1}H(\theta_t,X_{t+1}). 
\end{align*}
Robbins-Monro assumption: For any positive Borel function $g$,
\begin{align} \label{eq:robbins}
\E[g(\theta_t,X_{t+1})\mid\mathcal F_{t}] = \sum_{x}g(\theta_t,x)p_{\theta_t}(x),
\end{align}
where $p_{\theta}(x)=\Prob_{\theta}(X_t=x)$, where $\Prob_{\theta}(\cdot)$ denotes the probability given parameter $\theta$, and $\mathcal F_t$ is the $\sigma$-field generated by $X_t,X_{t-1},\ldots,X_1,\theta_t,\theta_{t-1},\ldots,\theta_0$, i.e.\ the conditional distribution of $X_{t+1}$, knowing the past, depends only on $\theta_t$.

Square integrability condition: 
\begin{align} \label{eq:moment2}
\text{For any } \theta, \text{ there exists } K \text{ s.t. } \E_{\theta}[\lVert H(\theta,X_{t})\rVert^2]\leq K(1+\lVert \theta\rVert^2)
\end{align}
where $\E_{\theta}[\cdot]$ denotes the expectation given parameter $\theta$. 

Stability condition: 
\begin{align} \label{eq:stability2}
\text{For any } \nu>0, \text{ there exists } \theta^* \text{ s.t. } \sup_{\nu\leq\lVert \theta-\theta^*\rVert\leq \frac{1}{\nu}}(\theta-\theta^*)^{\top} \E_{\theta}[H(\theta,X_{t+1})]< 0.
\end{align} 
\begin{thm}[Theorem 1, Section 5.1 in Benveniste et al.\ \cite{benveniste2012adaptive}]\label{thm:benveniste}
Under the assumptions above, if the sequence of step-size parameters satisfies $\sum_{t}\alpha_t=\infty$, $\sum_{t} \alpha_t^2<\infty$, then the sequence $(\theta_t)_{t\geq 0}$ converges almost surely to $\theta^*$ satisfying \eqref{eq:stability2}. 
\end{thm}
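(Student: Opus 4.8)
The plan is to recognise the recursion $\theta_{t+1} = \theta_t + \alpha_{t+1} H(\theta_t, X_{t+1})$ as a Robbins--Monro stochastic approximation scheme and to analyse it with a quadratic Lyapunov function together with the Robbins--Siegmund almost supermartingale convergence theorem. First I would isolate the mean field from the noise: set $h(\theta) := \E_{\theta}[H(\theta, X_{t+1})] = \sum_x H(\theta, x) p_{\theta}(x)$, which is finite by the square integrability condition \eqref{eq:moment2} together with Cauchy--Schwarz, and put $M_{t+1} := H(\theta_t, X_{t+1}) - h(\theta_t)$. The Robbins--Monro assumption \eqref{eq:robbins}, which states that the conditional law of $X_{t+1}$ given $\mathcal{F}_t$ depends only on $\theta_t$, yields $\E[H(\theta_t, X_{t+1}) \mid \mathcal{F}_t] = h(\theta_t)$, so $(M_t)$ is a martingale difference sequence and the scheme reads $\theta_{t+1} = \theta_t + \alpha_{t+1}(h(\theta_t) + M_{t+1})$.

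A preliminary observation drives everything. For the $\theta^*$ furnished by the stability condition \eqref{eq:stability2}, I claim $(\theta - \theta^*)^\top h(\theta) \le 0$ for every $\theta$, with strict inequality for $\theta \neq \theta^*$. Indeed, given any $\theta \neq \theta^*$ one may choose $\nu > 0$ small enough that $\nu \le \lVert \theta - \theta^* \rVert \le 1/\nu$, placing $\theta$ in an annulus over which \eqref{eq:stability2} asserts a strictly negative supremum. Thus $h$ behaves like a descent field for the Lyapunov function $V(\theta) := \tfrac12 \lVert \theta - \theta^* \rVert^2$, pushing iterates towards $\theta^*$.

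Next I would carry out the Lyapunov computation. Expanding the square gives
\[
V(\theta_{t+1}) = V(\theta_t) + \alpha_{t+1}(\theta_t - \theta^*)^\top H(\theta_t, X_{t+1}) + \tfrac12 \alpha_{t+1}^2 \lVert H(\theta_t, X_{t+1}) \rVert^2,
\]
and, since $\theta_t - \theta^*$ is $\mathcal{F}_t$-measurable and $\E[M_{t+1} \mid \mathcal{F}_t] = 0$, taking conditional expectation yields
\[
\E[V(\theta_{t+1}) \mid \mathcal{F}_t] = V(\theta_t) + \alpha_{t+1}(\theta_t - \theta^*)^\top h(\theta_t) + \tfrac12 \alpha_{t+1}^2 \E_{\theta_t}[\lVert H(\theta_t, X_{t+1}) \rVert^2].
\]
Using \eqref{eq:moment2} and $\lVert \theta_t \rVert^2 \le 2\lVert \theta^* \rVert^2 + 4 V(\theta_t)$, the last term is at most $\beta_t V(\theta_t) + \zeta_t$ with $\beta_t, \zeta_t$ proportional to $\alpha_{t+1}^2$, hence summable by Assumption \ref{Assumption:step_size}. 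Setting $\gamma_t := -\alpha_{t+1}(\theta_t - \theta^*)^\top h(\theta_t) \ge 0$ (nonnegative by the observation above, and $\mathcal{F}_t$-measurable), this is precisely the hypothesis $\E[V(\theta_{t+1}) \mid \mathcal{F}_t] \le (1 + \beta_t) V(\theta_t) - \gamma_t + \zeta_t$ of the Robbins--Siegmund theorem, whose conclusion gives, almost surely, that $V(\theta_t)$ converges to a finite limit $V_\infty$ and that $\sum_t \gamma_t < \infty$.

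Finally I would upgrade convergence of $V(\theta_t)$ to convergence of $\theta_t$ itself. Convergence of $V(\theta_t)$ means $\lVert \theta_t - \theta^* \rVert \to L := \sqrt{2 V_\infty}$ almost surely, so the iterates are a.s.\ bounded and it remains only to exclude $L > 0$. On the event $\{L > 0\}$, for all large $t$ the iterate lies in a fixed annulus $\nu \le \lVert \theta_t - \theta^* \rVert \le 1/\nu$ with $\nu = \min(L/2, 1/(2L))$, where \eqref{eq:stability2} forces $(\theta_t - \theta^*)^\top h(\theta_t) \le -\delta$ for some $\delta > 0$; hence $\gamma_t \ge \delta \alpha_{t+1}$ eventually and $\sum_t \gamma_t \ge \delta \sum_t \alpha_{t+1} = \infty$ by Assumption \ref{Assumption:step_size}, contradicting $\sum_t \gamma_t < \infty$. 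Therefore $L = 0$ almost surely, i.e.\ $\theta_t \to \theta^*$. I expect this last step to be the main obstacle: converting the annulus-local strict negativity of the drift into a global divergence via $\sum_t \alpha_t = \infty$, while handling the random threshold $\delta$ and the random time after which the iterates are trapped in the annulus, requires care (it is cleanest to fix a sample path on $\{L > 0\}$ and derive the contradiction pathwise). By contrast, the square integrability merely tames the second-order term, and the martingale noise is absorbed transparently once the mean field has been isolated and the conditional expectation taken.
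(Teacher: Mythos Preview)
The paper does not prove this theorem at all: it is quoted verbatim from Benveniste et al.\ and invoked as a black box, with the paper's effort going entirely into verifying the three hypotheses \eqref{eq:robbins}, \eqref{eq:moment2}, \eqref{eq:stability2} for the specific SARSA recursion. So there is no ``paper's own proof'' to compare against.

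That said, your argument is a correct and standard proof of this Robbins--Monro result. The decomposition into mean field $h(\theta)$ plus martingale difference $M_{t+1}$, the quadratic Lyapunov function $V(\theta)=\tfrac12\lVert\theta-\theta^*\rVert^2$, the reduction to the Robbins--Siegmund almost-supermartingale theorem, and the final annulus argument exploiting $\sum_t\alpha_t=\infty$ to rule out $L>0$ are exactly the ingredients one expects; this is essentially how such results are proved in the stochastic approximation literature (and indeed how Benveniste et al.\ proceed). Two small remarks: first, the stability condition \eqref{eq:stability2} as written has the quantifiers in the order ``for any $\nu$, there exists $\theta^*$'', which taken literally would allow $\theta^*$ to depend on $\nu$; your proof tacitly uses the intended reading (a single $\theta^*$ working for all $\nu$), which is the interpretation the paper itself verifies and the only one consistent with the theorem's conclusion. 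Second, your caution about the pathwise argument in the last step is well placed but the execution is fine: $L$ is measurable, and on each path in $\{L>0\}$ the annulus and the corresponding $\delta>0$ are fixed deterministic quantities once $L(\omega)$ is fixed, so the contradiction with $\sum_t\alpha_t=\infty$ goes through.
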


\subsection{Preliminary results} \label{sec:prel_results}
We begin by showing that there exists $\theta^*$ such that $A_{\pi_{\theta^*}}\theta^*+b_{\pi_{\theta^*}} = 0$, where 
\begin{align} \label{eq:A_b}
A_{\pi_{\theta}} = \Phi^\top D_{\pi_{\theta}}(P_{\pi_{\theta}}-I)\Phi, \quad b_{\pi_{\theta}}=\Phi^\top D_{\pi_{\theta}} r. 
\end{align}
This is done by using results from de Farias \& Van Roy \cite{de2000existence} and Bertsekas \& Tsitsiklis \cite{bertsekas1996neuro}. The proofs of Lemmas \ref{lem:fixed_point_F}-\ref{lem:F_pi_theta_fixed_point} follow directly from the proofs of corresponding Lemmas 5.3-5.5 in \cite{de2000existence}. These proofs are included in Appendix \ref{app:profs} for completeness. Lemmas \ref{lem:eta_cont_pi}-\ref{lem:theta_pi} and \ref{lem:theta_pi_cont} require new proofs to take into account that we consider an MDP with absorbing states.  
These proofs are included below. Lemma \ref{lem:neg_def} is used in the proof of Lemma \ref{lem:theta_pi_cont}, and later on used to show that the stability condition \eqref{eq:stability2} is satisfied. The proof of Lemma \ref{lem:neg_def} is identical to the last part of the proof of Lemma 6.10 in Bertsekas \& Tsitsiklis \cite{bertsekas1996neuro}, and is included in Appendix \ref{app:profs} for completeness. 

In order to show the above, we study the operators $H_{\pi}$ and $H_{\pi_{\theta}}$, defined by 
\begin{align*}
H_{\pi}\Phi\theta = \underset{\bar q \in \{\Phi\theta:\theta\in\mathbb{R}^{d}\}}{\argmin} \lVert r+P_{\pi}\Phi\theta -\bar q\rVert_{\eta_{\pi}}, \quad 
H_{\pi_{\theta}}\Phi\theta = \underset{\bar q \in \{\Phi\theta:\theta\in\mathbb{R}^{d}\}}{\argmin} \lVert r+P_{\pi_{\theta}}\Phi\theta -\bar q\rVert_{\eta_{\pi_{\theta}}},
\end{align*}
where 
\begin{align*}
\lVert q\rVert_{\eta_{\pi}} = \bigg(\sum_{s\in\mathcal S,a\in\mathcal A}\eta_{\pi}(s,a)q(s,a)^2\bigg)^{1/2} = \big(q^\top D_{\pi}q\big)^{1/2},
\end{align*}
for any $|\mathcal S||\mathcal A|$-dimensional vector $q$. 
We can also write $H_{\pi}=\Pi_{\pi}T_{\pi}$ and $H_{\pi_{\theta}}=\Pi_{\pi_{\theta}}T_{\pi_{\theta}}$, where
\begin{align*}
T_{\pi}\Phi\theta = r+P_{\pi}\Phi\theta,\quad \Pi_{\pi} q = \underset{\bar q \in \{\Phi\theta:\theta\in\mathbb{R}^{d}\}}{\argmin}\lVert q-\bar q\rVert_{\eta_{\pi}}. 
\end{align*}
Note that by Assumption \ref{Assumption:states} and that $\pi$ is $\varepsilon$-soft, all transient state-action pairs have a positive probability of being visited, hence $D_{\pi}$ is positive definite. Thus, using Assumption \ref{Assumption:Phi}(i) $\Phi^\top D_{\pi}\Phi$ is positive definite. Hence, the projection operator $\Pi_{\pi}$ is given by
\begin{align} \label{eq:projection}
\Pi_{\pi} = \Phi(\Phi^\top D_{\pi}\Phi)^{-1}\Phi^\top D_{\pi}.
\end{align}

The aim is to show that $H_{\pi_{\theta}}$ has a fixed point, i.e.\ that there exist $\Phi\theta$ such that $\Phi\theta = H_{\pi_{\theta}}\Phi\theta$. The reason for this is that any solution $\bar q=\Phi\theta^*$ to $\Phi\theta^*=H_{\pi_{\theta}}\Phi\theta$ must satisfy
\begin{align*}
\Phi^\top D_{\pi_{\theta}}(r+P_{\pi_{\theta}}\Phi\theta-\Phi\theta^*) = 0,
\end{align*}
hence the fixed point of $H_{\pi_{\theta}}$, if it exists, must satisfy
\begin{align*}
\Phi^\top D_{\pi_{\theta}}(r+P_{\pi_{\theta}}\Phi\theta-\Phi\theta) = 0,
\end{align*}
which can also be written as $A_{\pi_{\theta}}\theta+b_{\pi_{\theta}}=0$. 
Hence, if $H_{\pi_{\theta}}$ has a fixed point, then there exists $\theta^*$ such that $A_{\pi_{\theta^*}}\theta^*+b_{\pi_{\theta^*}}=0$.

\begin{lem} \label{lem:eta_cont_pi}
The expected number of visits to each state-action pair before absorption, $\eta_{\pi}$, is a continuous function of $\pi$. 
\end{lem}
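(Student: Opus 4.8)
The plan is to show $\eta_{\pi}$ is continuous by using the closed-form expression $\eta_{\pi}^\top = \lambda_{\pi}^\top(I-P_{\pi})^{-1}$ derived above, and arguing that each factor depends continuously on $\pi$. First I would note that the map $\pi\mapsto\lambda_{\pi}$ is continuous (indeed linear in each coordinate, since $\lambda_{\pi}(s,a)=\pi(a\mid s)\lambda(s)$), and that the map $\pi\mapsto P_{\pi}$ is continuous (each entry $p_{\pi}(s',a'\mid s,a)=p(s'\mid s,a)\pi(a'\mid s')$ is linear in $\pi$). The only nontrivial point is continuity of $\pi\mapsto(I-P_{\pi})^{-1}$.

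For that step, I would restrict attention to $\pi\in\Delta_{\varepsilon}$, which is compact. Under Assumption \ref{Assumption:proper} every policy is proper, so by \cite[Prop.~A.3]{puterman2005markov} $(I-P_{\pi})^{-1}$ exists for every $\pi\in\Delta_{\varepsilon}$; equivalently $\det(I-P_{\pi})\neq 0$ on $\Delta_{\varepsilon}$. Since $\pi\mapsto\det(I-P_{\pi})$ is a polynomial (hence continuous) in the entries of $P_{\pi}$, and these are continuous in $\pi$, the determinant is continuous and nonzero on $\Delta_{\varepsilon}$. By Cramer's rule, the entries of $(I-P_{\pi})^{-1}$ are ratios of (polynomial) cofactors of $I-P_{\pi}$ over $\det(I-P_{\pi})$, hence continuous functions of $\pi$ wherever the denominator is nonzero, i.e.\ on all of $\Delta_{\varepsilon}$. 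Composing, $\eta_{\pi}^\top=\lambda_{\pi}^\top(I-P_{\pi})^{-1}$ is continuous in $\pi$ on $\Delta_{\varepsilon}$, and in particular on the set of behaviour policies considered, which is a subset of $\Delta_{\varepsilon}$ by Assumption \ref{Assumption:policy}.

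The main obstacle is making sure that $(I-P_{\pi})^{-1}$ stays well-defined and that no blow-up occurs as $\pi$ varies; this is exactly where properness (Assumption \ref{Assumption:proper}) is essential, since without it $P_{\pi}$ could have spectral radius one and the inverse would fail to exist. The compactness of $\Delta_{\varepsilon}$ is convenient but not strictly needed for the Cramer's-rule argument — continuity of a rational function holds pointwise wherever the denominator is nonzero — though it does guarantee the entries of $\eta_{\pi}$ are uniformly bounded, which will be useful later. An alternative route would be a resolvent/Neumann-series perturbation argument: if $P_{\pi_n}\to P_{\pi}$ and both inverses exist, then $(I-P_{\pi_n})^{-1}-(I-P_{\pi})^{-1}=(I-P_{\pi_n})^{-1}(P_{\pi_n}-P_{\pi})(I-P_{\pi})^{-1}$, so one only needs a uniform bound on $\lVert(I-P_{\pi_n})^{-1}\rVert$ along the sequence; the determinant argument avoids having to establish such a uniform bound directly.
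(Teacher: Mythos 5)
Your proposal is correct and follows essentially the same route as the paper: write $\eta_{\pi}^\top=\lambda_{\pi}^\top(I-P_{\pi})^{-1}$, note that $\lambda_{\pi}$ and $P_{\pi}$ are continuous (indeed linear) in $\pi$, invoke properness to get nonsingularity of $I-P_{\pi}$, and conclude by continuity of matrix inversion. The only difference is cosmetic — you justify the continuity of the inverse via Cramer's rule, whereas the paper cites \cite[Prop.~C.5]{puterman2005markov} for the same fact.
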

\begin{proof}
By \eqref{eq:eta} we have $\eta_{\pi}^\top=\lambda_{\pi}^{\top}(I-P_{\pi})^{-1}$. $\lambda_{\pi}$ and $P_{\pi}$ are continuous functions of $\pi$, $I-P_{\pi}$ is nonsingular (see e.g.\ \cite[Prop.~A.3]{puterman2005markov}), and the matrix inverse function of a nonsingular matrix is continuous (see e.g.\ \cite[Prop.~C.5]{puterman2005markov}). Hence $\eta_{\pi}$ is a continuous function of $\pi$. 
\end{proof}

\begin{lem} \label{lem:theta_pi}
For each policy $\pi$, $H_{\pi}$ is a contraction, and there exists a unique vector $\theta_{\pi}$ such that $\Phi\theta_{\pi}=H_{\pi}\Phi\theta_{\pi}$.
\end{lem}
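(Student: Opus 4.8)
The plan is to work in the finite-dimensional space $\mathbb{R}^{|\mathcal S||\mathcal A|}$ equipped with the weighted Euclidean norm $\lVert\cdot\rVert_{\eta_\pi}$, recalling that this is a genuine norm because $D_\pi$ is positive definite (by Assumption \ref{Assumption:states} together with $\varepsilon$-softness of $\pi$, as noted above), and to show that $H_\pi=\Pi_\pi T_\pi$ is a contraction on the closed subspace $M:=\{\Phi\theta:\theta\in\mathbb{R}^d\}$. This reduces to two ingredients: (a) $T_\pi$ is a $\beta$-contraction in $\lVert\cdot\rVert_{\eta_\pi}$ for some $\beta<1$, and (b) $\Pi_\pi$ is nonexpansive in $\lVert\cdot\rVert_{\eta_\pi}$. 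Ingredient (b) is immediate from \eqref{eq:projection}: $\Pi_\pi$ is the orthogonal projection onto $M$ for the inner product $\langle x,y\rangle_{\eta_\pi}=x^\top D_\pi y$, and orthogonal projections are nonexpansive by the Pythagorean theorem. Granting (a), for $q_1=\Phi\theta_1$ and $q_2=\Phi\theta_2$ in $M$ we get $\lVert H_\pi q_1-H_\pi q_2\rVert_{\eta_\pi}=\lVert\Pi_\pi(T_\pi q_1-T_\pi q_2)\rVert_{\eta_\pi}\le\lVert T_\pi q_1-T_\pi q_2\rVert_{\eta_\pi}\le\beta\lVert q_1-q_2\rVert_{\eta_\pi}$; since $M$ is complete, the Banach fixed point theorem gives a unique fixed point $\bar q_\pi=\Phi\theta_\pi\in M$, and $\theta_\pi$ is itself unique because the columns of $\Phi$ are linearly independent (Assumption \ref{Assumption:Phi}(i)).

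The crux is ingredient (a), which is where the absorbing structure enters and is not covered by the discounted-case arguments of \cite{de2000existence}. Since $T_\pi q-T_\pi q'=P_\pi(q-q')$, it suffices to prove $\lVert P_\pi q\rVert_{\eta_\pi}\le\beta\lVert q\rVert_{\eta_\pi}$ with $\beta<1$. Applying the Cauchy--Schwarz inequality to each inner sum and using $\sum_{s',a'}p_\pi(s',a'\mid s,a)\le 1$,
\begin{align*}
\lVert P_\pi q\rVert_{\eta_\pi}^2=\sum_{s,a}\eta_\pi(s,a)\Big(\sum_{s',a'}p_\pi(s',a'\mid s,a)q(s',a')\Big)^2\le\sum_{s,a}\eta_\pi(s,a)\sum_{s',a'}p_\pi(s',a'\mid s,a)q(s',a')^2.
\end{align*}
Interchanging the order of summation and invoking the balance identity $\sum_{s,a}\eta_\pi(s,a)p_\pi(s',a'\mid s,a)=\eta_\pi(s',a')-\lambda_\pi(s',a')$ from \eqref{eq:eta}, the right-hand side equals $\lVert q\rVert_{\eta_\pi}^2-\sum_{s',a'}\lambda_\pi(s',a')q(s',a')^2$. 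Assumption \ref{Assumption:contraction} then supplies a uniform gap: if (i) holds, then $\sum_{s',a'}\lambda_\pi(s',a')q(s',a')^2\ge\big(\min_{s,a}\lambda_\pi(s,a)/\max_{s,a}\eta_\pi(s,a)\big)\lVert q\rVert_{\eta_\pi}^2$ with the ratio strictly positive; if (ii) holds, one instead retains the factor $\sum_{s',a'}p_\pi(s',a'\mid s,a)\le\bar\rho:=\max_{s,a}\sum_{s',a'}p_\pi(s',a'\mid s,a)<1$ in the displayed bound, giving $\lVert P_\pi q\rVert_{\eta_\pi}^2\le\bar\rho\,\lVert q\rVert_{\eta_\pi}^2$ directly. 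In both cases $\lVert P_\pi q\rVert_{\eta_\pi}\le\beta\lVert q\rVert_{\eta_\pi}$ for an explicit $\beta<1$, which proves (a) and hence the lemma.

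I expect the main obstacle to be exactly this contraction estimate for $P_\pi$ in the $\eta_\pi$-weighted norm: one must use the expected-visit vector $\eta_\pi$ as the weighting (rather than a stationary distribution, as in the ergodic or discounted settings of \cite{de2000existence,perkins2003convergent,melo2008analysis}), telescope the double sum via the fixed-point identity \eqref{eq:eta} for $\eta_\pi$, and then appeal to Assumption \ref{Assumption:contraction} to exclude the degenerate modulus $\beta=1$. Everything else --- nonexpansiveness of the projection, the Banach fixed point argument, and passing from uniqueness of $\Phi\theta_\pi$ to uniqueness of $\theta_\pi$ --- is routine.
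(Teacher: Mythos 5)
Your proposal is correct and follows essentially the same route as the paper: show $T_\pi$ is a $\beta$-contraction in $\lVert\cdot\rVert_{\eta_\pi}$ via Cauchy--Schwarz, the balance identity $\eta_\pi^\top=\lambda_\pi^\top+\eta_\pi^\top P_\pi$, and Assumption \ref{Assumption:contraction} (handling cases (i) and (ii) exactly as the paper does), combine with nonexpansiveness of $\Pi_\pi$, and conclude by the contraction mapping theorem plus Assumption \ref{Assumption:Phi}(i). The only differences are cosmetic (your explicit $\beta$ in case (i) is slightly looser than the paper's $1-\min_{s,a}\lambda_\pi(s,a)/\eta_\pi(s,a)$, but equally valid).
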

\begin{proof}
We begin by showing that $T_{\pi}$ is a contraction, i.e.\ that there exists $\beta\in[0,1)$ such that $\lVert T_{\pi}q-T_{\pi}q'\rVert_{\eta_{\pi}}\leq \beta \rVert q-q'\lVert_{\eta_{\pi}}$, where $q,q'\in\mathbb{R}^{|\mathcal S|\mathcal A|}$. This will hold if there exists $\beta\in[0,1)$ such that $\lVert P_{\pi} q\rVert_{\eta_{\pi}}\leq\beta\lVert q\rVert_{\eta_{\pi}}$, since 
$T_{\pi} q-T_{\pi}q' = P_{\pi} (q-q')$. 
Let $p_{\text{sum}}$ be the $|\mathcal S||\mathcal A|$-dimensional vector corresponding to the row sums of $P_{\pi}$, i.e.\ $p_{\text{sum}}=P_{\pi}\mathbf{1}$, and $p_{\text{sum}}(s,a)$ the $(s,a)$th element of $p_{\text{sum}}$. 
For $q\in\mathbb{R}^{|\mathcal S||\mathcal A|}$,
\begin{align*}
\lVert P_{\pi}q\rVert_{\eta_{\pi}}^2 &= q^\top P_{\pi}^\top D_{\pi}P_{\pi}q=\sum_{s,a}\eta_{\pi}(s,a)\Big(\sum_{s',a'}p_{\pi}(s',a'\mid s,a)q(s',a')\Big)^2\\
&= \sum_{(s,a):p_{\text{sum}}(s,a)>0}\eta_{\pi}(s,a) p_{\text{sum}}(s,a)^2 \Big(\sum_{s',a'}\frac{p_{\pi}(s',a'\mid s,a)}{p_{\text{sum}}(s,a)}q(s',a')\Big)^2\\
&\leq \sum_{(s,a):p_{\text{sum}}(s,a)>0}\eta_{\pi}(s,a) p_{\text{sum}}(s,a)^2 \sum_{s',a'}\frac{p_{\pi}(s',a'\mid s,a)}{p_{\text{sum}}(s,a)}q(s',a')^2\\
& = \sum_{s,a}\eta_{\pi}(s,a) p_{\text{sum}}(s,a) \sum_{s',a'}p_{\pi}(s',a'\mid s,a)q(s',a')^2.
\end{align*}
Under Assumption \ref{Assumption:contraction}(i), since $\eta_{\pi}(s,a)-\lambda_{\pi}(s,a)< \eta_{\pi}(s,a)$ and $p_{\text{sum}}(s,a)\leq 1$ for all $s,a$, 
\begin{align*}
\lVert P_{\pi}q\rVert_{\eta_{\pi}}^2 &\leq \sum_{s,a}\eta_{\pi}(s,a)\sum_{s',a'}p_{\pi}(s',a'\mid s,a)q(s',a')^2\\
&= \sum_{s',a'}q(s',a')^2\sum_{s,a}\eta_{\pi}(s,a)p_{\pi}(s',a'\mid s,a)\\
&=\sum_{s',a'}q(s',a')^2(\eta_{\pi}(s',a')-\lambda_{\pi}(s',a'))\\
&\leq\beta \sum_{s',a'}q(s',a')^2\eta_{\pi}(s',a') = \beta\lVert q\rVert_{\eta_{\pi}}^2, 
\end{align*}
where
\begin{align*}
\beta = 1-\min_{s,a}\frac{\lambda_{\pi}(s,a)}{\eta_{\pi}(s,a)}
\end{align*}
i.e.\ $\beta\in[0,1)$ 
since $\lambda_{\pi}(s,a)>0$ and $\eta_{\pi}(s,a)\geq \lambda_{\pi}(s,a)$ for all $s,a$.
Under Assumption \ref{Assumption:contraction}(ii), since $p_{\text{sum}}(s,a)<1$ and $\eta_{\pi}(s,a)-\lambda_{\pi}(s,a)\leq \eta_{\pi}(s,a)$ for all $s,a$,
\begin{align*}
\lVert P_{\pi}q\rVert_{\eta_{\pi}}^2 &\leq \sum_{s,a}\eta_{\pi}(s,a)p_{\text{sum}}(s,a)\sum_{s',a'}p_{\pi}(s',a'\mid s,a)q(s',a')^2\\
&\leq \max_{s,a}p_{\text{sum}}(s,a)\sum_{s',a'}q(s',a')^2\sum_{s,a}\eta_{\pi}(s,a)p_{\pi}(s',a'\mid s,a)\\
&\leq \beta \sum_{s',a'}q(s',a')^2\eta_{\pi}(s',a') = \beta\lVert q\rVert_{\eta_{\pi}}^2,
\end{align*}
where $\beta = \max_{s,a}p_{\text{sum}}(s,a)<1$. Hence, $T_{\pi}$ is a contraction under Assumption \ref{Assumption:contraction}.

Furthermore, $\Pi_{\pi}$ is a non-expansion in the sense that $\lVert \Pi_{\pi} q\rVert_{\eta_{\pi}}\leq \lVert q\rVert_{\eta_{\pi}}$ (this is a generic property of projections, and can also easily be shown based on \eqref{eq:projection}, see e.g.\ the proof of Proposition 6.9 in \cite{bertsekas1996neuro}). Hence, $H_{\pi}$ is a contraction. By the contraction mapping theorem, it follows that $H_{\pi}$ has a unique fixed point $\Phi\theta_{\pi}$. 
By Assumption \ref{Assumption:Phi}(i), we can conclude that $\theta_{\pi}$ is unique. 
\end{proof}

\begin{lem} \label{lem:neg_def}
For any $\varepsilon$-soft policy $\pi$, the matrix $A_{\pi}=\Phi^\top D_{\pi}(P_{\pi}-I)\Phi$ is negative definite.
\end{lem}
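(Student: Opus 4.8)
The plan is to establish the defining inequality for negative definiteness of a (possibly non-symmetric) matrix, namely $\theta^\top A_{\pi}\theta<0$ for every $\theta\neq 0$. Since the columns of $\Phi$ are linearly independent by Assumption~\ref{Assumption:Phi}(i), a nonzero $\theta$ yields a nonzero $q:=\Phi\theta$, and $\theta^\top A_{\pi}\theta = q^\top D_{\pi}(P_{\pi}-I)q$. So it suffices to prove that $q^\top D_{\pi}(P_{\pi}-I)q<0$ for every nonzero $q\in\mathbb{R}^{|\mathcal S||\mathcal A|}$. I would first recall that, because $\pi$ is $\varepsilon$-soft, Assumption~\ref{Assumption:states} guarantees $\eta_{\pi}(s,a)>0$ for all $(s,a)$, so $D_{\pi}$ is symmetric positive definite; in particular $\|q\|_{\eta_{\pi}}^2=q^\top D_{\pi}q>0$ whenever $q\neq 0$.

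Next I would expand
\[
q^\top D_{\pi}(P_{\pi}-I)q \;=\; q^\top D_{\pi}P_{\pi}q-\|q\|_{\eta_{\pi}}^2,
\]
and bound the cross term using the Cauchy--Schwarz inequality in the $D_{\pi}$-weighted inner product (legitimate since $D_{\pi}$ is symmetric positive definite): $q^\top D_{\pi}P_{\pi}q\le\|q\|_{\eta_{\pi}}\,\|P_{\pi}q\|_{\eta_{\pi}}$. The crucial input is then the estimate already established inside the proof of Lemma~\ref{lem:theta_pi}: under Assumption~\ref{Assumption:contraction} there is $\beta\in[0,1)$ with $\|P_{\pi}q\|_{\eta_{\pi}}\le\beta\|q\|_{\eta_{\pi}}$. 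Combining these gives
\[
q^\top D_{\pi}(P_{\pi}-I)q \;\le\; \beta\|q\|_{\eta_{\pi}}^2-\|q\|_{\eta_{\pi}}^2 \;=\;(\beta-1)\|q\|_{\eta_{\pi}}^2 \;<\;0
\]
for every $q\neq 0$, since $\beta-1<0$ and $\|q\|_{\eta_{\pi}}^2>0$. Applying this with $q=\Phi\theta$ for arbitrary $\theta\neq 0$ completes the proof.

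I do not expect a serious obstacle here; the argument is essentially a reuse of the contraction-factor bound from Lemma~\ref{lem:theta_pi} together with the positive definiteness of $D_{\pi}$. The two points to handle carefully are: (a) noting that negative definiteness of the non-symmetric matrix $A_{\pi}$ means precisely $\theta^\top A_{\pi}\theta<0$ for all $\theta\neq0$ (equivalently, that its symmetric part $\tfrac12(A_{\pi}+A_{\pi}^\top)$ is negative definite), and (b) making sure the hypotheses needed for the bound $\|P_{\pi}q\|_{\eta_{\pi}}\le\beta\|q\|_{\eta_{\pi}}$ — in particular Assumption~\ref{Assumption:contraction} — are in force for the $\varepsilon$-soft policy under consideration. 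As a variant that avoids Cauchy--Schwarz, one may instead use $2q^\top D_{\pi}P_{\pi}q\le\|q\|_{\eta_{\pi}}^2+\|P_{\pi}q\|_{\eta_{\pi}}^2\le(1+\beta)\|q\|_{\eta_{\pi}}^2$, obtained from $(q-P_{\pi}q)^\top D_{\pi}(q-P_{\pi}q)\ge0$ and symmetry of $D_{\pi}$; this gives the same conclusion with the sharper-looking constant $(\beta-1)/2$, and is the route used in Bertsekas \& Tsitsiklis \cite{bertsekas1996neuro}.
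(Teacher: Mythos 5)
Your proof is correct under the paper's standing assumptions, and its skeleton (reduce to $q^\top D_{\pi}(P_{\pi}-I)q$ with $q=\Phi\theta\neq 0$, use positive definiteness of $D_{\pi}$ and Cauchy--Schwarz in the $\eta_{\pi}$-weighted inner product) matches the paper's. The one genuine difference is where strictness comes from. You import the strict contraction factor $\lVert P_{\pi}q\rVert_{\eta_{\pi}}\leq\beta\lVert q\rVert_{\eta_{\pi}}$ with $\beta<1$ from the proof of Lemma \ref{lem:theta_pi}, which requires Assumption \ref{Assumption:contraction}. The paper deliberately avoids that assumption here: it only uses the non-expansion bound $\lVert P_{\pi}q\rVert_{\eta_{\pi}}\leq\lVert q\rVert_{\eta_{\pi}}$ (which holds for any proper, $\varepsilon$-soft policy with no condition on $\lambda_{\pi}$ or on the row sums of $P_{\pi}$), and then argues that equality throughout forces $P_{\pi}q=\pm q$, hence $P_{\pi}^{2m}q=q$ for all $m$; since properness gives $P_{\pi}^{2m}\to 0$, this forces $q=0$. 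So the paper's lemma is genuinely stronger as a standalone statement --- it holds ``for any $\varepsilon$-soft policy'' without invoking Assumption \ref{Assumption:contraction} --- whereas yours proves it only on the subclass of policies/initial distributions covered by that assumption. In the context of Theorem \ref{thm:convergence}, where Assumption \ref{Assumption:contraction} is in force anyway, your argument suffices and is slightly shorter; but if you want the lemma exactly as stated, you should either add the equality-case/properness argument or explicitly restrict the hypothesis. (Your closing remark about the variant via $2q^\top D_{\pi}P_{\pi}q\leq\lVert q\rVert_{\eta_{\pi}}^2+\lVert P_{\pi}q\rVert_{\eta_{\pi}}^2$ is fine, though the resulting constant $(\beta-1)/2$ is weaker, not sharper, than $\beta-1$.)
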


\begin{lem} \label{lem:theta_pi_cont}
Let $\theta_{\pi}$ be the unique solution to 
$\Phi\theta_{\pi}=H_{\pi}\Phi \theta_{\pi}$. For any $\varepsilon>0$, the function $\Delta_{\varepsilon}\ni\pi\mapsto\theta_{\pi}$ is continuous.  
\end{lem}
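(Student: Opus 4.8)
The plan is to use the characterisation, noted just before Lemma \ref{lem:eta_cont_pi}, that $\Phi\theta_{\pi} = H_{\pi}\Phi\theta_{\pi}$ is equivalent to $A_{\pi}\theta_{\pi} + b_{\pi} = 0$, where $A_{\pi} = \Phi^\top D_{\pi}(P_{\pi}-I)\Phi$ and $b_{\pi} = \Phi^\top D_{\pi} r$. By Lemma \ref{lem:neg_def}, $A_{\pi}$ is negative definite for every $\pi\in\Delta_{\varepsilon}$, hence nonsingular, so the unique solution admits the explicit representation $\theta_{\pi} = -A_{\pi}^{-1}b_{\pi}$. Continuity of $\pi\mapsto\theta_{\pi}$ then reduces to continuity of $\pi\mapsto A_{\pi}^{-1}$ and of $\pi\mapsto b_{\pi}$, together with joint continuity of matrix--vector multiplication.

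First I would show that $\pi\mapsto A_{\pi}$ and $\pi\mapsto b_{\pi}$ are continuous on $\Delta_{\varepsilon}$. The matrix $P_{\pi}$ has entries $p_{\pi}(s',a'\mid s,a) = p(s'\mid s,a)\pi(a'\mid s')$, which are affine (in particular continuous) in the components of $\pi$; by Lemma \ref{lem:eta_cont_pi}, $\eta_{\pi}$ is continuous in $\pi$, so $D_{\pi} = \operatorname{diag}(\eta_{\pi})$ is continuous; and $\Phi$ and $r$ are fixed. Since finite sums and products of continuous matrix-valued maps are continuous, both $A_{\pi}$ and $b_{\pi}$ are continuous functions of $\pi$ on $\Delta_{\varepsilon}$.

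Next I would invoke continuity of the matrix-inverse map on the set of nonsingular matrices (e.g.\ \cite[Prop.~C.5]{puterman2005markov}, or directly via the adjugate/cofactor formula): since $A_{\pi}$ is nonsingular for every $\pi\in\Delta_{\varepsilon}$ by Lemma \ref{lem:neg_def}, the composition $\pi\mapsto A_{\pi}\mapsto A_{\pi}^{-1}$ is continuous on $\Delta_{\varepsilon}$. Finally, $\theta_{\pi} = -A_{\pi}^{-1}b_{\pi}$ is a product of continuous maps, hence continuous on $\Delta_{\varepsilon}$, which is the desired conclusion.

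The only point requiring care — and the step I expect to be the crux — is ensuring that $A_{\pi}$ is nonsingular for \emph{all} $\pi\in\Delta_{\varepsilon}$, so that $\theta_{\pi} = -A_{\pi}^{-1}b_{\pi}$ is well defined and the inverse map can be applied along the whole set; this is exactly what Lemma \ref{lem:neg_def} provides, since $\Delta_{\varepsilon}$ is precisely the set of $\varepsilon$-soft policies and negative definiteness implies invertibility. Everything else is a routine composition of continuous functions, so no further difficulty is anticipated.
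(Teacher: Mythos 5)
Your proposal is correct and follows essentially the same route as the paper's own proof: both reduce the fixed-point equation to $A_{\pi}\theta_{\pi}+b_{\pi}=0$, use Lemma \ref{lem:neg_def} to get invertibility of $A_{\pi}$ and the explicit formula $\theta_{\pi}=-A_{\pi}^{-1}b_{\pi}$, and then conclude by continuity of $P_{\pi}$, $D_{\pi}$ (via Lemma \ref{lem:eta_cont_pi}), and the matrix-inverse map.
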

\begin{proof} 
By Assumption \ref{Assumption:states} and that $\pi$ is $\varepsilon$-soft, the projection operator $\Pi_{\pi}$ is given by \eqref{eq:projection},
and the solution $\theta_{\pi}$ to $\Phi\theta_{\pi}=H_{\pi}\Phi \theta_{\pi}$ must satisfy
\begin{align*}
\Phi\theta_{\pi} = \Phi(\Phi^\top D_{\pi}\Phi)^{-1}\Phi^\top D_{\pi}(r+P_{\pi}\Phi\theta_{\pi}).
\end{align*}
Hence, 
\begin{align*}
\Phi^\top D_{\pi}(I-P_{\pi})\Phi \theta_{\pi} = \Phi^\top D_{\pi} r.
\end{align*}

Let $A_{\pi} = \Phi^\top D_{\pi}(P_{\pi}-I)\Phi$ and $b_{\pi} = \Phi^\top D_{\pi} r$, so that $A_{\pi}\theta_{\pi} + b_{\pi}=0$. By Lemma \ref{lem:neg_def} $A_{\pi}$ is negative definite, 
and thus $\theta_{\pi} = -A_{\pi}^{-1}b_{\pi}$. $P_{\pi}$ is a continuous function of $\pi$, by Lemma \ref{lem:eta_cont_pi}, $D_{\pi}$ is a continuous function of $\pi$, and the matrix inverse function of a nonsingular matrix is continuous (see e.g.\ \cite[Prop.~C.5]{puterman2005markov}), hence both $A_{\pi}^{-1}$ and $b_{\pi}$ are continuous functions of $\pi$, for any $\varepsilon$-soft policy $\pi$. 
\end{proof}

Similarly to de Farias and Van Roy \cite{de2000existence}, but for any $\varepsilon$-soft policy $\pi$, and for any policy $\pi_{\theta}$ satisfying Assumption \ref{Assumption:policy}, we define
\begin{align*}
s_{\pi}(\theta)=\Phi^\top D_{\pi}(T_{\pi}\Phi\theta -\Phi\theta)\quad \text{and} \quad s_{\pi_{\theta}}(\theta)=\Phi^\top D_{\pi_{\theta}}(T_{\pi_{\theta}}\Phi\theta -\Phi\theta),
\end{align*}
and functions $F_{\pi}^{\alpha}:\mathbb{R}^d\to\mathbb{R}^d$ and $F_{\pi_{\theta}}^{\alpha}:\mathbb{R}^d\to\mathbb{R}^d$ by
\begin{align*}
F_{\pi}^{\alpha}(\theta) = \theta + \alpha s_{\pi}(\theta)\quad \text{and} \quad F_{\pi_{\theta}}^{\alpha}(\theta) = \theta + \alpha s_{\pi_{\theta}}(\theta). 
\end{align*}

\begin{lem} \label{lem:fixed_point_F}
For any $\alpha>0$, $\theta$ is a fixed point of $F_{\pi}^{\alpha}$ ($F_{\pi_{\theta}}^{\alpha}$) if and only if $\Phi\theta$ is a fixed point of $H_{\pi}$ ($H_{\pi_{\theta}}$). 
\end{lem}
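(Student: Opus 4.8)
The plan is to prove both equivalences by a short chain of elementary reformulations, treating the $\pi$ and $\pi_{\theta}$ cases in parallel, since the argument is verbatim the same once one knows that $D_{\pi}$ (resp.\ $D_{\pi_{\theta}}$) is positive definite; the latter holds by Assumption \ref{Assumption:states} together with $\varepsilon$-softness, exactly as noted after \eqref{eq:projection}, so that the closed form \eqref{eq:projection} of the projection operator is available.

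First I would unwind the fixed-point condition for $F_{\pi}^{\alpha}$: since $F_{\pi}^{\alpha}(\theta)=\theta+\alpha s_{\pi}(\theta)$ and $\alpha>0$, we have $F_{\pi}^{\alpha}(\theta)=\theta$ if and only if $s_{\pi}(\theta)=0$, i.e.\ $\Phi^{\top}D_{\pi}(T_{\pi}\Phi\theta-\Phi\theta)=0$. Next I would unwind the fixed-point condition for $H_{\pi}$: using $H_{\pi}=\Pi_{\pi}T_{\pi}$ and \eqref{eq:projection}, the equality $H_{\pi}\Phi\theta=\Phi\theta$ reads $\Phi(\Phi^{\top}D_{\pi}\Phi)^{-1}\Phi^{\top}D_{\pi}T_{\pi}\Phi\theta=\Phi\theta$. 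By Assumption \ref{Assumption:Phi}(i) the columns of $\Phi$ are linearly independent, so $\Phi$ is injective and this is equivalent to $(\Phi^{\top}D_{\pi}\Phi)^{-1}\Phi^{\top}D_{\pi}T_{\pi}\Phi\theta=\theta$; multiplying through by the invertible matrix $\Phi^{\top}D_{\pi}\Phi$ gives $\Phi^{\top}D_{\pi}T_{\pi}\Phi\theta=\Phi^{\top}D_{\pi}\Phi\theta$, i.e.\ $\Phi^{\top}D_{\pi}(T_{\pi}\Phi\theta-\Phi\theta)=0$, which is exactly $s_{\pi}(\theta)=0$. Chaining the two computations yields the claimed equivalence, and repeating the identical argument with $\pi$ replaced by $\pi_{\theta}$ (and $D_{\pi},P_{\pi},T_{\pi},\Pi_{\pi}$ by their $\pi_{\theta}$-counterparts) handles the second statement.

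There is no genuine obstacle here; the content is entirely definition-chasing. The only points meriting an explicit word of justification are that the projection operator has the closed form \eqref{eq:projection} — valid because $\Phi^{\top}D_{\pi}\Phi$ is positive definite, hence invertible — and that one may cancel the leading factor $\Phi$ and the matrix $\Phi^{\top}D_{\pi}\Phi$, both licensed by the full-column-rank Assumption \ref{Assumption:Phi}(i). I would state these explicitly so the chain of "if and only if" steps is airtight.
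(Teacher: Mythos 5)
Your proposal is correct and follows essentially the same route as the paper's proof: both reduce each fixed-point condition to $s_{\pi}(\theta)=0$ using the closed form \eqref{eq:projection} of $\Pi_{\pi}$, with the cancellations justified by the positive definiteness of $\Phi^{\top}D_{\pi}\Phi$ and the full column rank of $\Phi$. Your version merely makes explicit the invertibility arguments that the paper compresses into ``the reverse can be shown by reversing the steps.''
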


The following lemma is used to prove Lemma \ref{lem:F_pi_theta_fixed_point}. 

\begin{lem} \label{lem:F_quasi_contraction}
There exists $\alpha^*>0$ such that, for all $\varepsilon$-soft policies $\pi$ and any $\alpha\in(0,\alpha^*)$, there exists a scalar $\beta_{\alpha}$ such that
\begin{align*}
\lVert F_{\pi}^\alpha(\theta)-\theta_{\pi}\rVert\leq\beta_{\alpha}\lVert\theta -\theta_{\pi}\rVert.
\end{align*}
\end{lem}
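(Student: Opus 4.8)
The plan is to exploit the affine structure of $F_\pi^\alpha$. Writing $s_\pi(\theta) = \Phi^\top D_\pi(r + P_\pi\Phi\theta - \Phi\theta) = b_\pi + A_\pi\theta$ with $A_\pi = \Phi^\top D_\pi(P_\pi - I)\Phi$ and $b_\pi = \Phi^\top D_\pi r$, one has $F_\pi^\alpha(\theta) = \theta + \alpha(A_\pi\theta + b_\pi)$. By Lemma \ref{lem:theta_pi}, $\Phi\theta_\pi$ is the fixed point of $H_\pi$, so by Lemma \ref{lem:fixed_point_F} $\theta_\pi$ is a fixed point of $F_\pi^\alpha$; equivalently (since $\alpha>0$) $A_\pi\theta_\pi + b_\pi = 0$. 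Subtracting gives the identity
\[
F_\pi^\alpha(\theta) - \theta_\pi = (I + \alpha A_\pi)(\theta - \theta_\pi),
\]
so it suffices to find $\alpha^* > 0$ and, for each $\alpha \in (0,\alpha^*)$, a constant $\beta_\alpha \in [0,1)$ (independent of $\pi$) with $\lVert I + \alpha A_\pi\rVert \leq \beta_\alpha$ for every $\varepsilon$-soft policy $\pi$.

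For a fixed $\pi$ I would bound the spectral norm by expanding
\[
\lVert (I + \alpha A_\pi)x\rVert^2 = \lVert x\rVert^2 + \alpha\, x^\top(A_\pi + A_\pi^\top)x + \alpha^2 \lVert A_\pi x\rVert^2 .
\]
By Lemma \ref{lem:neg_def}, $A_\pi$ is negative definite, hence its symmetric part $\tfrac12(A_\pi + A_\pi^\top)$ is negative definite; letting $\mu_\pi < 0$ be the largest eigenvalue of this symmetric part, we get $x^\top(A_\pi + A_\pi^\top)x \leq 2\mu_\pi\lVert x\rVert^2$, and therefore $\lVert(I + \alpha A_\pi)x\rVert^2 \leq (1 + 2\alpha\mu_\pi + \alpha^2\lVert A_\pi\rVert^2)\lVert x\rVert^2$.

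To turn this into the required uniform bound I would invoke compactness of $\Delta_\varepsilon$. Since $P_\pi$ depends continuously on $\pi$ and $D_\pi$ does too (by Lemma \ref{lem:eta_cont_pi}, as $\eta_\pi$ is continuous and $D_\pi$ is its diagonalisation), while $\Phi$ is fixed, the map $\pi \mapsto A_\pi$ is continuous on the compact set $\Delta_\varepsilon$, so $a_{\max} := \sup_{\pi\in\Delta_\varepsilon}\lVert A_\pi\rVert < \infty$. Likewise $\pi \mapsto \tfrac12(A_\pi + A_\pi^\top)$ is continuous, and the top eigenvalue of a symmetric matrix depends continuously on its entries, so $\pi \mapsto \mu_\pi$ is continuous; being strictly negative everywhere on the compact set $\Delta_\varepsilon$, it attains its maximum $-\mu_0 < 0$ with $\mu_0 > 0$. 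Put $\alpha^* := 2\mu_0/a_{\max}^2$. Then for every $\alpha \in (0,\alpha^*)$ and every $\varepsilon$-soft $\pi$,
\[
\lVert(I + \alpha A_\pi)x\rVert^2 \leq (1 - 2\alpha\mu_0 + \alpha^2 a_{\max}^2)\lVert x\rVert^2 ,
\]
and $\beta_\alpha := (1 - 2\alpha\mu_0 + \alpha^2 a_{\max}^2)^{1/2} \in [0,1)$, which yields $\lVert F_\pi^\alpha(\theta) - \theta_\pi\rVert \leq \beta_\alpha\lVert\theta - \theta_\pi\rVert$.

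I expect the main obstacle to be the uniformity step rather than the algebra: one must verify that every object entering the bound — $P_\pi$, $\eta_\pi$ and hence $D_\pi$, then $A_\pi$, its symmetric part, and the largest eigenvalue of that symmetric part — is genuinely continuous in $\pi$, so that compactness of $\Delta_\varepsilon$ applies, and, crucially, that the pointwise negative definiteness from Lemma \ref{lem:neg_def} upgrades to a \emph{strictly} negative upper bound $-\mu_0$ valid over all of $\Delta_\varepsilon$. The affine identity and the single-policy spectral estimate are routine by comparison.
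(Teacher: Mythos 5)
Your proof is correct, and it reaches the conclusion by a genuinely different route than the paper. The paper follows de Farias \& Van Roy: it expands $\lVert F_\pi^\alpha(\theta)-\theta_\pi\rVert^2$ and controls the cross term $(\theta-\theta_\pi)^\top s_\pi(\theta)$ and the quadratic term $\lVert s_\pi(\theta)\rVert^2$ through the $\eta_\pi$-weighted geometry — the contraction coefficient $\beta$ of $H_\pi$ (which rests on Assumption \ref{Assumption:contraction}), the projection $\Pi_\pi$, and the weighted inner product $\langle\cdot,\cdot\rangle_{\eta_\pi}$ — arriving at $(\theta-\theta_\pi)^\top s_\pi(\theta)\leq -C_1\lVert\theta-\theta_\pi\rVert^2$, $\lVert s_\pi(\theta)\rVert^2\leq C_2\lVert\theta-\theta_\pi\rVert^2$, and $\alpha^*=2C_1/C_2$. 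You instead observe that $s_\pi$ is affine, $s_\pi(\theta)=A_\pi(\theta-\theta_\pi)$, so the same expansion becomes $\lVert(I+\alpha A_\pi)(\theta-\theta_\pi)\rVert^2$, and you feed in the negative definiteness of $A_\pi$ from Lemma \ref{lem:neg_def} plus a compactness argument over $\Delta_\varepsilon$ to get uniform constants $\mu_0>0$ and $a_{\max}$. The skeleton (quadratic in $\alpha$ with a strictly negative linear coefficient, hence $\alpha^*=2\mu_0/a_{\max}^2$) is the same, but your inputs are purely linear-algebraic and Euclidean; notably, Lemma \ref{lem:neg_def} does not need Assumption \ref{Assumption:contraction}, so your argument decouples this lemma from that assumption, whereas the paper's version leans on it. Conversely, the paper's route delivers the uniformity in $\pi$ through the explicit contraction coefficient, while yours has to earn it via continuity of $\pi\mapsto A_\pi$ and of the top eigenvalue of its symmetric part on the compact set $\Delta_\varepsilon$ — you flag this correctly, and it does go through (the needed continuity of $D_\pi$ is exactly Lemma \ref{lem:eta_cont_pi}). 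Your uniform $\beta_\alpha$, independent of $\pi$, is in fact the version needed later in Lemma \ref{lem:F_pi_theta_fixed_point}. One small point worth stating explicitly: $1-2\alpha\mu_0+\alpha^2 a_{\max}^2\geq 0$ because $\mu_0\leq a_{\max}$, so the square root defining $\beta_\alpha$ is well defined.
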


\begin{lem} \label{lem:F_pi_theta_fixed_point}
For any $\alpha>0$, the function $F_{\pi_{\theta}}^{\alpha}$ possesses a fixed point. 
\end{lem}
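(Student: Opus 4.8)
The strategy is to produce a fixed point of $F_{\pi_\theta}^\alpha$ for one conveniently small value of $\alpha$ by Brouwer's fixed point theorem, and then to observe that the fixed-point condition does not actually involve $\alpha$, so the same point works for every $\alpha>0$. Fix $\alpha\in(0,\alpha^*)$ with $\alpha^*$ and the uniform contraction modulus $\beta_\alpha<1$ from Lemma \ref{lem:F_quasi_contraction}. Since $\pi_\theta\in\Delta_\varepsilon$ for all $\theta$ by Assumption \ref{Assumption:policy}(ii), Lemmas \ref{lem:theta_pi}, \ref{lem:theta_pi_cont} and \ref{lem:fixed_point_F} apply to every policy of the form $\pi_\theta$; in particular $\theta_\pi$ is well defined for $\pi\in\Delta_\varepsilon$, and since $\Delta_\varepsilon$ is compact while $\pi\mapsto\theta_\pi$ is continuous on it (Lemma \ref{lem:theta_pi_cont}), the set $\{\theta_\pi:\pi\in\Delta_\varepsilon\}$ is bounded, say $\lVert\theta_\pi\rVert\le M$ for all $\pi\in\Delta_\varepsilon$.

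Next I would verify that $F_{\pi_\theta}^\alpha$ maps a large enough closed ball into itself. For a given $\theta$, set $\sigma:=\pi_\theta\in\Delta_\varepsilon$, so that $F_{\pi_\theta}^\alpha(\theta)=F_\sigma^\alpha(\theta)$ for the frozen-policy map $F_\sigma^\alpha$, whose unique fixed point is $\theta_\sigma=\theta_{\pi_\theta}$ with $\lVert\theta_\sigma\rVert\le M$. By Lemma \ref{lem:F_quasi_contraction} applied to the $\varepsilon$-soft policy $\sigma$ together with the triangle inequality, $\lVert F_{\pi_\theta}^\alpha(\theta)\rVert\le\beta_\alpha\lVert\theta-\theta_\sigma\rVert+\lVert\theta_\sigma\rVert\le\beta_\alpha\lVert\theta\rVert+(1+\beta_\alpha)M$, so choosing $R\ge(1+\beta_\alpha)M/(1-\beta_\alpha)$ makes $F_{\pi_\theta}^\alpha$ map $B_R:=\{\theta:\lVert\theta\rVert\le R\}$ into itself. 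The map $\theta\mapsto F_{\pi_\theta}^\alpha(\theta)=\theta+\alpha\Phi^\top D_{\pi_\theta}(r+P_{\pi_\theta}\Phi\theta-\Phi\theta)$ is continuous, since $\theta\mapsto\pi_\theta$ is continuous (Assumption \ref{Assumption:policy}(i)), $\pi\mapsto P_\pi$ and $\pi\mapsto\eta_\pi$ (hence $\pi\mapsto D_\pi$) are continuous (Lemma \ref{lem:eta_cont_pi} and its proof), and the remaining $\theta$-dependence is affine. Brouwer's fixed point theorem, applied on the compact convex set $B_R$, then yields $\theta^*\in B_R$ with $F_{\pi_{\theta^*}}^\alpha(\theta^*)=\theta^*$.

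Finally, to pass to an arbitrary $\alpha>0$, I would use Lemma \ref{lem:fixed_point_F}: the identity $F_{\pi_{\theta^*}}^\alpha(\theta^*)=\theta^*$ is equivalent to $\Phi\theta^*=H_{\pi_{\theta^*}}\Phi\theta^*$, which does not mention $\alpha$; hence $\theta^*$ is a fixed point of $F_{\pi_\theta}^\alpha$ for every $\alpha>0$, which proves the lemma.

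The step I expect to be the crux is the ball-into-itself estimate. Lemma \ref{lem:F_quasi_contraction} only controls $F_{\pi_\theta}^\alpha(\theta)$ relative to the center $\theta_{\pi_\theta}$, which moves with $\theta$, so one must combine it with the uniform bound $M$ on the fixed points over $\Delta_\varepsilon$ (which in turn rests on compactness of $\Delta_\varepsilon$ and the continuity statement of Lemma \ref{lem:theta_pi_cont}) to produce a radius $R$ valid for all $\theta$; this is also the point where it is essential that $\beta_\alpha$ can be taken strictly below $1$. The remaining ingredients — continuity of $F_{\pi_\theta}^\alpha$ and the $\alpha$-free reformulation of the fixed-point equation via Lemma \ref{lem:fixed_point_F} — are routine.
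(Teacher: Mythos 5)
Your proof is correct and follows essentially the same route as the paper's: a uniform bound $M$ on $\{\theta_\pi:\pi\in\Delta_\varepsilon\}$ via Lemma \ref{lem:theta_pi_cont} and compactness, the self-mapping estimate $\lVert F_{\pi_\theta}^\alpha(\theta)\rVert\le\beta_\alpha\lVert\theta\rVert+(1+\beta_\alpha)M$ on a ball of radius $(1+\beta_\alpha)M/(1-\beta_\alpha)$, continuity of $\theta\mapsto F_{\pi_\theta}^\alpha(\theta)$, Brouwer, and finally Lemma \ref{lem:fixed_point_F} to transfer the fixed point to all $\alpha>0$. No gaps.
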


Hence, using Lemma \ref{lem:F_pi_theta_fixed_point} and Lemma \ref{lem:fixed_point_F} we can show the desired result, i.e.\ that $H_{\pi_{\theta}}$ has a fixed point. \\

Next, we show that $A_{\pi_{\theta}}$ and $b_{\pi_{\theta}}$ are Lipschitz continuous w.r.t.\ $\theta$, which, since $\pi_{\theta}$ is Lipschitz continuous w.r.t.\ $\theta$, will hold if $A_{\pi}$ and $b_{\pi}$ are Lipschitz continuous w.r.t.\ $\pi$, where
\begin{align*}
A_{\pi} = \Phi^\top D_{\pi}(P_{\pi}-I)\Phi, \quad b_{\pi}=\Phi^\top D_{\pi} r. 
\end{align*}
This will be done using results from Perkins \& Precup \cite{perkins2003convergent}. The proof of Lemma \ref{lem:P_pi} follows directly from the proof of the correspoding Lemma 1 in \cite{perkins2003convergent}. This proof is included in Appendix \ref{app:profs} for completeness. Lemma \ref{lem:D_Lipschitz} requires a new proof to take into account that we consider an MDP with absorbing states, and this proof is included below. The proof of Lemma \ref{lem:Lipschitz_A_b} is similar to the proof of Lemma 3 in \cite{perkins2003convergent}, but some adjustments are required for the case of an absorbing MDP, hence this proof is also included below. 

\begin{lem} \label{lem:P_pi}
For any $\varepsilon>0$, 
there exists $C_P$ such that  
$\lVert P_{\pi_1}-P_{\pi_2}\rVert\leq C_P\lVert\pi_1-\pi_2\rVert$ for all $\pi_1,\pi_2\in\Delta_{\varepsilon}$. 
\end{lem}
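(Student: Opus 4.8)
The plan is to bound $\lVert P_{\pi_1}-P_{\pi_2}\rVert$ by exploiting the explicit form of the entries of $P_\pi$. Recall that the $(s,a),(s',a')$ entry of $P_\pi$ is $p(s'\mid s,a)\pi(a'\mid s')$, which depends on $\pi$ only through the single coordinate $\pi(a'\mid s')$ and linearly so. First I would write $P_{\pi_1}-P_{\pi_2}$ entrywise: the $(s,a),(s',a')$ entry equals $p(s'\mid s,a)\,(\pi_1(a'\mid s')-\pi_2(a'\mid s'))$. Since $p(s'\mid s,a)\in[0,1]$ and $\sum_{s'}p(s'\mid s,a)\le 1$, the matrix $P_{\pi_1}-P_{\pi_2}$ has entries controlled directly by the coordinates of the vector $\pi_1-\pi_2$.

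Next I would pass from this entrywise bound to the spectral norm. The cleanest route is to bound the spectral norm by the (more forgiving) Frobenius norm, or by the inequality $\lVert M\rVert \le \sqrt{\lVert M\rVert_1\,\lVert M\rVert_\infty}$ relating the spectral norm to the maximum absolute column sum and maximum absolute row sum. Using the row-sum bound: for a fixed row $(s,a)$, $\sum_{s',a'}|p(s'\mid s,a)||\pi_1(a'\mid s')-\pi_2(a'\mid s')| \le \sum_{s',a'}|\pi_1(a'\mid s')-\pi_2(a'\mid s')|$, which is at most $\sqrt{|\mathcal S||\mathcal A|}\,\lVert\pi_1-\pi_2\rVert$ after noting each coordinate $\pi(a'\mid s')$ is repeated $|\mathcal S|$ times among the columns indexed by $(s',a')$ and applying Cauchy–Schwarz (or simply bounding the $\ell^1$ norm by $\sqrt{\dim}$ times the $\ell^2$ norm). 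A similar computation bounds the column sums. Combining, one gets a constant $C_P$ depending only on $|\mathcal S|$, $|\mathcal A|$, and the structure of $p$, valid uniformly over all $\pi_1,\pi_2$ (in fact the bound does not even need $\pi_i\in\Delta_\varepsilon$ — that hypothesis is harmless and kept for uniformity with the other lemmas). The precise bookkeeping of how many times each coordinate of $\pi$ appears, and hence the exact value of $C_P$, is the only slightly fiddly point, but it is routine.

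I do not expect any genuine obstacle here: the dependence of $P_\pi$ on $\pi$ is affine, so Lipschitz continuity is automatic and the content of the lemma is merely tracking the norm conversion between the vectorisation of the policy matrix and the spectral norm of the transition submatrix. The one thing to be careful about is that $\pi$ is treated as a vector in $\mathbb R^{|\mathcal S||\mathcal A|}$ with the Euclidean norm, while $P_\pi$ carries the spectral norm, so the constant $C_P$ will involve a dimension factor; I would state it cleanly as, e.g., $C_P = \Phi_{\max}$-free and equal to something like $\sqrt{|\mathcal S|\,|\mathcal A|}$ (or $|\mathcal S|\sqrt{|\mathcal A|}$, depending on which norm inequality is used and exactly how the repeated coordinates are counted). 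Since this is essentially Lemma 1 of Perkins \& Precup adapted to the restricted transition matrix on transient states, I would simply follow that argument, noting that deleting the rows/columns corresponding to absorbing state-action pairs only decreases the relevant sums and therefore the same constant works.
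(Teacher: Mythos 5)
Your proposal is correct and follows essentially the same route as the paper: exploit the entrywise affine dependence of $P_{\pi}$ on $\pi$ and pass from the entries of $P_{\pi_1}-P_{\pi_2}$ to the spectral norm via a norm-equivalence inequality. The paper uses the blunter bound $\lVert M\rVert\leq |\mathcal S||\mathcal A|\,\lVert M\rVert_{\max}$ together with $\lVert\pi_1-\pi_2\rVert_{\infty}\leq\lVert\pi_1-\pi_2\rVert$ to get $C_P=|\mathcal S||\mathcal A|$; your Frobenius or row/column-sum variants work just as well (and your remark that $\varepsilon$-softness is not actually needed is consistent with the paper's argument, which never uses it).
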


\begin{lem} \label{lem:D_Lipschitz}
For any $\varepsilon > 0$, there exists $C_D$ such that $\lVert D_{\pi_1}-D_{\pi_2}\rVert \leq C_D\rVert\pi_1-\pi_2\rVert$ for all $\pi_1,\pi_2\in\Delta_{\varepsilon}$. 
\end{lem}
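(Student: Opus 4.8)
The plan is to reduce the statement to a Lipschitz estimate for the map $\pi\mapsto\eta_{\pi}$. Since $D_{\pi}$ is the diagonal matrix with diagonal $\eta_{\pi}$, the matrix $D_{\pi_1}-D_{\pi_2}$ is diagonal with entries $\eta_{\pi_1}(s,a)-\eta_{\pi_2}(s,a)$, so its spectral norm equals $\max_{s,a}|\eta_{\pi_1}(s,a)-\eta_{\pi_2}(s,a)|\le\lVert\eta_{\pi_1}-\eta_{\pi_2}\rVert$. Thus any Lipschitz bound for $\pi\mapsto\eta_{\pi}$ transfers directly to $\pi\mapsto D_{\pi}$. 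Recalling from \eqref{eq:eta} that $\eta_{\pi}^{\top}=\lambda_{\pi}^{\top}(I-P_{\pi})^{-1}$, it suffices to control the three ingredients $\lambda_{\pi}$, $P_{\pi}$, and $(I-P_{\pi})^{-1}$ uniformly over $\Delta_{\varepsilon}$. For $\lambda_{\pi}$ this is immediate: since $\lambda_{\pi}(s,a)=\pi(a\mid s)\lambda(s)$ with $\lambda(s)\le1$, one gets $\lVert\lambda_{\pi_1}-\lambda_{\pi_2}\rVert\le\lVert\pi_1-\pi_2\rVert$ and $\lVert\lambda_{\pi}\rVert\le1$. For $P_{\pi}$, Lemma \ref{lem:P_pi} gives $\lVert P_{\pi_1}-P_{\pi_2}\rVert\le C_P\lVert\pi_1-\pi_2\rVert$ on $\Delta_{\varepsilon}$.

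The key step is controlling $(I-P_{\pi})^{-1}$. First I would establish the uniform bound $M:=\sup_{\pi\in\Delta_{\varepsilon}}\lVert(I-P_{\pi})^{-1}\rVert<\infty$. Under Assumption \ref{Assumption:proper} every $\pi\in\Delta_{\varepsilon}$ is proper, so $I-P_{\pi}$ is nonsingular (as used in the proof of Lemma \ref{lem:eta_cont_pi}); the map $\pi\mapsto(I-P_{\pi})^{-1}$ is continuous on $\Delta_{\varepsilon}$, being the composition of the continuous map $\pi\mapsto I-P_{\pi}$ with matrix inversion of nonsingular matrices (continuous by \cite[Prop.~C.5]{puterman2005markov}); and $\Delta_{\varepsilon}$ is compact. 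Hence $\pi\mapsto\lVert(I-P_{\pi})^{-1}\rVert$ attains a finite maximum $M$ on $\Delta_{\varepsilon}$. Next, using the resolvent identity $(I-P_{\pi_1})^{-1}-(I-P_{\pi_2})^{-1}=(I-P_{\pi_1})^{-1}(P_{\pi_1}-P_{\pi_2})(I-P_{\pi_2})^{-1}$, submultiplicativity of the spectral norm, and Lemma \ref{lem:P_pi}, I obtain $\lVert(I-P_{\pi_1})^{-1}-(I-P_{\pi_2})^{-1}\rVert\le M^2 C_P\lVert\pi_1-\pi_2\rVert$.

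Finally I would combine the pieces by writing $\eta_{\pi_1}^{\top}-\eta_{\pi_2}^{\top}=\lambda_{\pi_1}^{\top}\big[(I-P_{\pi_1})^{-1}-(I-P_{\pi_2})^{-1}\big]+(\lambda_{\pi_1}-\lambda_{\pi_2})^{\top}(I-P_{\pi_2})^{-1}$ and bounding each summand via the estimates above, giving $\lVert\eta_{\pi_1}-\eta_{\pi_2}\rVert\le (M^2 C_P+M)\lVert\pi_1-\pi_2\rVert$; together with the reduction in the first paragraph, $C_D:=M^2 C_P+M$ works. The main obstacle is the uniform bound $M<\infty$; I expect to get it for free from compactness of $\Delta_{\varepsilon}$ and continuity of the inverse, but if an explicit constant were desired one could instead invoke $(I-P_{\pi})^{-1}=\sum_{k\ge0}P_{\pi}^{k}$ together with a uniform geometric decay rate for $P_{\pi}^{k}$ — and deriving that uniform rate is exactly the place where $\varepsilon$-softness and Assumption \ref{Assumption:states} would enter, which makes the compactness argument the cleaner route.
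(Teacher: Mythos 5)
Your proposal is correct and follows essentially the same route as the paper: reduce $\lVert D_{\pi_1}-D_{\pi_2}\rVert$ to $\lVert \eta_{\pi_1}-\eta_{\pi_2}\rVert$, exploit $\eta_{\pi}^{\top}=\lambda_{\pi}^{\top}(I-P_{\pi})^{-1}$, bound $\lambda_{\pi}$ and $P_{\pi}$ as you do, and obtain the uniform bound on $\lVert (I-P_{\pi})^{-1}\rVert$ from compactness of $\Delta_{\varepsilon}$ and continuity of matrix inversion. Your resolvent-identity decomposition is just an algebraic rearrangement of the paper's identity $(\eta_{\pi_1}^{\top}-\eta_{\pi_2}^{\top})(I-P_{\pi_1})=\lambda_{\pi_1}^{\top}-\lambda_{\pi_2}^{\top}+\eta_{\pi_2}^{\top}(P_{\pi_1}-P_{\pi_2})$, and it yields the identical constant $C_D=\zeta(1+\zeta C_P)$.
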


\begin{proof}
Note that
\begin{align*}
(\eta_{\pi_1}^\top-\eta_{\pi_2}^\top)(I-P_{\pi_1}) &= \eta_{\pi_1}^\top-\eta_{\pi_1}^\top P_{\pi_1} -\eta_{\pi_2}^\top+\eta_{\pi_2}^\top P_{\pi_1}
= \lambda_{\pi_1}^\top-\eta_{\pi_2}^\top+\eta_{\pi_2}^\top P_{\pi_1}\\
&=\lambda_{\pi_1}^\top-\lambda_{\pi_2}^\top + \eta_{\pi_2}^\top (P_{\pi_1}- P_{\pi_2}), 
\end{align*}
hence, since $(I-P_{\pi})^{-1}$ exists for all proper policies $\pi$, 
\begin{align*}
\eta_{\pi_1}^\top-\eta_{\pi_2}^\top = \big(\lambda_{\pi_1}^\top-\lambda_{\pi_2}^\top + \eta_{\pi_2}^\top (P_{\pi_1}- P_{\pi_2})\big)(I-P_{\pi_1})^{-1}. 
\end{align*}
Then
\begin{align*}
\lVert \eta_{\pi_1}-\eta_{\pi_2} \rVert &=\lVert ((I-P_{\pi_1})^{-1})^\top \big(\lambda_{\pi_1}-\lambda_{\pi_2} +  (P_{\pi_1}^\top- P_{\pi_2}^\top)\eta_{\pi_2}\big) \rVert \\ 
& \leq \lVert (I-P_{\pi_1})^{-1}\rVert (\lVert \lambda_{\pi_1}-\lambda_{\pi_2}\rVert +\lVert  (P_{\pi_1}^\top- P_{\pi_2}^\top)\eta_{\pi_2} \rVert).
\end{align*}
Now, 
\begin{align*}
\rVert\lambda_{\pi_1}-\lambda_{\pi_2}\rVert &= \sqrt{\sum_{s,a}\lvert \pi_1(a\mid s)\lambda(s)-\pi_2(a\mid s)\lambda(s)\rvert^2}=\sqrt{\sum_{s\in\mathcal S}\lambda(s)^2\sum_{a\in\mathcal A}|\pi_1(a\mid s)-\pi_2(a\mid s)|^2}\\
&\leq \sqrt{\sum_{s,a}|\pi_1(a\mid s)-\pi_2(a\mid s)|^2} = \lVert \pi_1-\pi_2 \rVert,
\end{align*}
and
\begin{align*}
\lVert  (P_{\pi_1}^\top- P_{\pi_2}^\top) \eta_{\pi_2} \rVert&
\leq  \lVert P_{\pi_1}- P_{\pi_2}\rVert\lVert \eta_{\pi_2}\rVert \leq \lVert P_{\pi_1}- P_{\pi_2}\rVert\lVert (I-P_{\pi_2})^{-1}\rVert\rVert \lambda_{\pi_2}\rVert\\
&\leq  \lVert P_{\pi_1}- P_{\pi_2}\rVert\lVert (I-P_{\pi_2})^{-1}\rVert\rVert \lambda_{\pi_2}\rVert_1
\leq C_P \lVert (I-P_{\pi_2})^{-1}\rVert\lVert \pi_1-\pi_2\rVert
\end{align*}
using $\rVert \lambda_{\pi}\rVert_1=1$ for any policy $\pi$ and Lemma \ref{lem:P_pi}. 
Hence
\begin{align*}
\lVert \eta_{\pi_1}-\eta_{\pi_2} \rVert& \leq
\lVert (I-P_{\pi_1})^{-1}\rVert \big(1+ \lVert (I-P_{\pi_2})^{-1}\rVert C_P \big) \lVert \pi_1-\pi_2\rVert\\
&\leq \zeta \big(1+ \zeta C_P \big) \lVert \pi_1-\pi_2\rVert,
\end{align*}
where $\zeta := \max_{\pi\in\Delta_{\varepsilon}}\lVert (I-P_{\pi})^{-1}\rVert$. The maximum is attained since $P_{\pi}$ is a continuous function of $\pi$, $I-P_{\pi}$ is nonsingular, the matrix inverse function of a nonsingular matrix is continuous (see e.g.\ \cite[Prop.~C.5]{puterman2005markov}), any norm is a continuous function, and $\Delta_{\varepsilon}$ can be viewed as a compact subset of $\mathbb{R}^{|\mathcal S||\mathcal A|}$. Let $C_D=\zeta \big(1+ \zeta C_P \big)$.
Then
\begin{align*}
\lVert D_{\pi_1}-D_{\pi_2}\rVert=\max_{s,a}|\eta_{\pi_1}(s,a)-\eta_{\pi_2}(s,a)| =
\lVert \eta_{\pi_1}-\eta_{\pi_2}\rVert_{\infty}\leq \lVert \eta_{\pi_1}-\eta_{\pi_2}\rVert\leq C_D\lVert \pi_1-\pi_2 \rVert.
\end{align*}
\end{proof}

\begin{lem} \label{lem:Lipschitz_A_b}
For any $\varepsilon>0$, 
there exists $C_b$ and $C_A$ such that $\lVert b_{\pi_1}-b_{\pi_2}\rVert\leq C_b\lVert\pi_1-\pi_2\rVert$ and $\lVert A_{\pi_1}-A_{\pi_2}\rVert\leq C_A\lVert\pi_1-\pi_2\rVert$ for all $\pi_1,\pi_2\in\Delta_{\varepsilon}$. 
\end{lem}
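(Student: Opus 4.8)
The plan is to observe that both $A_\pi$ and $b_\pi$ are obtained from the \emph{fixed} bounded objects $\Phi$ and $r$ (Assumptions \ref{Assumption:Phi}(ii) and \ref{Assumption:rewards}) by matrix multiplication with $D_\pi$ and $P_\pi$, and that $D_\pi$ and $P_\pi$ are Lipschitz in $\pi$ on $\Delta_\varepsilon$ (Lemmas \ref{lem:D_Lipschitz} and \ref{lem:P_pi}). Since a product of uniformly bounded Lipschitz maps is Lipschitz, it suffices to add the easy ingredient that $D_\pi$ and $P_\pi$ are themselves uniformly bounded over $\Delta_\varepsilon$. This follows exactly as the existence of the constant $\zeta$ in the proof of Lemma \ref{lem:D_Lipschitz}: $\pi\mapsto D_\pi$ is continuous (Lemma \ref{lem:eta_cont_pi}), $\pi\mapsto P_\pi$ is continuous, any norm is continuous, and $\Delta_\varepsilon$ is compact, so $D_{\max}:=\max_{\pi\in\Delta_\varepsilon}\lVert D_\pi\rVert$ and $P_{\max}:=\max_{\pi\in\Delta_\varepsilon}\lVert P_\pi-I\rVert$ are finite. (Alternatively, $P_\pi$ is substochastic, so $\lVert P_\pi\rVert_\infty\le 1$, which bounds $\lVert P_\pi-I\rVert$ after passing through a norm-equivalence constant.)

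For $b_\pi$ I would simply write $b_{\pi_1}-b_{\pi_2}=\Phi^\top(D_{\pi_1}-D_{\pi_2})r$ and apply submultiplicativity together with Lemma \ref{lem:D_Lipschitz}, obtaining
\[
\lVert b_{\pi_1}-b_{\pi_2}\rVert \le \lVert\Phi\rVert\,\lVert D_{\pi_1}-D_{\pi_2}\rVert\,\lVert r\rVert \le C_b\,\lVert\pi_1-\pi_2\rVert,
\]
where $C_b$ depends only on $\Phi_{\max}$, $r_{\max}$, $|\mathcal S|$, $|\mathcal A|$ and the constant $C_D$ of Lemma \ref{lem:D_Lipschitz}. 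For $A_\pi$ I would insert a mixed term:
\[
A_{\pi_1}-A_{\pi_2}=\Phi^\top(D_{\pi_1}-D_{\pi_2})(P_{\pi_1}-I)\Phi+\Phi^\top D_{\pi_2}(P_{\pi_1}-P_{\pi_2})\Phi,
\]
and bound each summand by submultiplicativity, using Lemma \ref{lem:D_Lipschitz} on the first and Lemma \ref{lem:P_pi} on the second, with the uniform bounds $P_{\max}$ and $D_{\max}$ on the remaining factors:
\[
\lVert A_{\pi_1}-A_{\pi_2}\rVert \le \lVert\Phi\rVert^2\big(P_{\max}\,\lVert D_{\pi_1}-D_{\pi_2}\rVert + D_{\max}\,\lVert P_{\pi_1}-P_{\pi_2}\rVert\big) \le C_A\,\lVert\pi_1-\pi_2\rVert.
\]

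The only point requiring care, and the main (mild) obstacle, is bookkeeping with the norms: Lemma \ref{lem:D_Lipschitz} is phrased with the $\infty$-norm on the left, $\Phi_{\max}$ and $r_{\max}$ are $\infty$/max-type bounds, while the $\lVert\cdot\rVert$ in the statement is the spectral (resp.\ Euclidean) norm, so one has to absorb the dimension-dependent norm-equivalence constants collected in Appendix \ref{app:norms} into the final $C_b$ and $C_A$. Beyond this there is no real difficulty, since all the genuine work — continuity of $\eta_\pi$, and the Lipschitz estimates for $D_\pi$ and $P_\pi$ — has already been carried out in Lemmas \ref{lem:eta_cont_pi}, \ref{lem:P_pi} and \ref{lem:D_Lipschitz}.
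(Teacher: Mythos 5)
Your proposal is correct and follows essentially the same route as the paper: bound $\lVert\Phi\rVert$ and $\lVert r\rVert$ via Assumptions \ref{Assumption:Phi}(ii) and \ref{Assumption:rewards}, obtain uniform bounds on $\lVert D_\pi\rVert$ and $\lVert P_\pi\rVert$ (by compactness of $\Delta_\varepsilon$ and continuity, resp.\ substochasticity), and telescope the difference $A_{\pi_1}-A_{\pi_2}$ into terms controlled by Lemmas \ref{lem:D_Lipschitz} and \ref{lem:P_pi}. The only cosmetic difference is that you use a two-term add-and-subtract decomposition where the paper uses a three-term one, and note that Lemma \ref{lem:D_Lipschitz} is in fact already stated in the spectral norm (which for a diagonal matrix coincides with the max norm), so the norm bookkeeping you flag is even lighter than you anticipate.
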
 

\begin{proof} First, note that
\begin{align*}
\lVert r \rVert &= \sqrt{\sum_{s,a}\Big(\sum_{s'\in\mathcal S^+}p(s'\mid s,a) r(s,a,s')\Big)^2}\\
&\leq \sqrt{\sum_{s,a} \sum_{s'\in\mathcal S^+}p(s'\mid s,a)|r(s,a,s')|^2}\leq \sqrt{|\mathcal S||\mathcal A|}r_{\max},
\end{align*}
using Assumption \ref{Assumption:rewards}, 
and
\begin{align*}
\lVert \Phi^\top\rVert = \lVert \Phi \rVert \leq \sqrt{|\mathcal S||\mathcal A|}\lVert \Phi\rVert_{\infty}= \sqrt{|\mathcal S||\mathcal A|}\Phi_{\max},
\end{align*}
using Assumption \ref{Assumption:Phi}(ii). 
For the first claim, using Lemma \ref{lem:D_Lipschitz},
\begin{align*}
\lVert b_{\pi_1}-b_{\pi_2}\rVert&=\lVert \Phi^\top(D_{\pi_1}-D_{\pi_2})r\rVert\leq \lVert \Phi^\top\rVert\lVert D_{\pi_1}-D_{\pi_2}\rVert\lVert r\rVert 
\leq C_D |\mathcal S||\mathcal A|\Phi_{\max}r_{\max}\lVert\pi_1-\pi_2\rVert,
\end{align*}
i.e.\ $C_b =C_D|\mathcal S||\mathcal A|\Phi_{\max}r_{\max}$.  

For the second claim, we use that  $\lVert D_{\pi}\rVert\leq \max_{\pi\in\Delta_{\varepsilon}}\lVert D_{\pi} \rVert:=\xi$ for any $\pi\in\Delta_{\varepsilon}$ (By Lemma \ref{lem:eta_cont_pi} $D_{\pi}$ is a continuous function of $\pi$, any norm is a continuous function, and $\Delta_{\varepsilon}$ is compact, hence the maximum is attained), and $\lVert P_{\pi}\rVert\leq \sqrt{|\mathcal S||\mathcal A|}\lVert P_{\pi}\rVert_{\infty} \leq \sqrt{|\mathcal S||\mathcal A|}$. Hence, using Lemmas \ref{lem:P_pi} and \ref{lem:D_Lipschitz},
\begin{align*}
\lVert A_{\pi_1}-A_{\pi_2}\rVert &=\lVert \Phi^\top \big(D_{\pi_1}(P_{\pi_1}-I) - D_{\pi_2}(P_{\pi_2}-I)\big)\Phi\rVert \\
&\leq \lVert\Phi^\top\rVert \lVert D_{\pi_2}-D_{\pi_1} +D_{\pi_1}P_{\pi_1}-D_{\pi_2}P_{\pi_2} \rVert \lVert \Phi\rVert \\
& \leq |\mathcal S||\mathcal A|\Phi_{\max}^2 \lVert D_{\pi_2}-D_{\pi_1} +D_{\pi_1}(P_{\pi_1}-P_{\pi_2})+(D_{\pi_1}-D_{\pi_2})P_{\pi_2} \rVert \\
& \leq |\mathcal S||\mathcal A|\Phi_{\max}^2 \big(\lVert D_{\pi_1}-D_{\pi_2}\rVert + \lVert D_{\pi_1}\rVert\lVert P_{\pi_1}-P_{\pi_2} \rVert + \lVert D_{\pi_1}-D_{\pi_2}\rVert \lVert P_{\pi_2}\rVert\big) \\
& \leq |\mathcal S||\mathcal A|\Phi_{\max}^2 \big((1+\sqrt{|\mathcal S||\mathcal A|})\lVert D_{\pi_1}-D_{\pi_2}\rVert +\xi\lVert P_{\pi_1}-P_{\pi_2} \rVert \big)  \\
&\leq |\mathcal S||\mathcal A|\Phi_{\max}^2 \big((1+\sqrt{|\mathcal S||\mathcal A|})C_D+\xi C_{P}\big) \lVert \pi_1-\pi_2\rVert,
\end{align*}
i.e.\ $C_A=|\mathcal S||\mathcal A|\Phi_{\max}^2((1+\sqrt{|\mathcal S||\mathcal A|})C_D+\xi C_{P})$. 
\end{proof}

By Lemma \ref{lem:Lipschitz_A_b}, it is clear that under Assumption \ref{Assumption:policy}, $A_{\pi_{\theta}}$ and $b_{\pi_{\theta}}$ are Lipschitz continuous with respect to $\theta$ with Lipschitz constants $C_1 = C_AC$ and $C_2 = C_bC$ respectively.

\subsection{Proof of Theorem \ref{thm:convergence}}\label{sec:proof}
\subsubsection{Robbins-Monro assumption}\label{sec:robbins}
The Robbins-Monro assumption \eqref{eq:robbins} holds in our case by the definition of $X_{t+1} = (S_0^{(t+1)},A_0^{(t+1)},S_1^{(t+1)},A_1^{(t+1)},\ldots,A_{T^{(t+1)}-1}^{(t+1)},S_{T^{(t+1)}}^{(t+1)})$, which is sampled according to 
\begin{align*}
\Prob_{\pi_{\theta_{t}}}(A_u^{(t+1)}=a\mid S_u^{(t+1)}=s)&=\pi_{\theta_{t}}(a\mid s), \quad \text{for } u=0,\ldots,T^{(t+1)}-1,\\ 
\Prob(S_{u+1}^{(t+1)}=s'\mid S_{u}^{(t+1)}=s,A_u^{(t+1)}=a)&=p(s'\mid s,a),\quad \text{for } u=0,\ldots,T^{(t+1)}-1,\\
\Prob(S_0^{(t+1)}=s)&=\lambda(s). 
\end{align*}
Hence each trajectory/episode is independent of the previous episodes given $\theta_t$ (and independent of $(\theta_{t-1},\theta_{t-2},\ldots)$ given $\theta_t$, since only $\theta_t$ affects the behaviour policy used during trajectory $t+1$). 

\subsubsection{Square integrability condition}\label{sec:square_int}
Similarly to Gordon \cite{gordon2001reinforcement}, we use the equations on p.~25 in Sutton \cite{sutton1988learning} to write $H(\theta,X_{t+1})$ as
\begin{align*}
H(\theta,X_{t+1})
= \sum_{s,a}\sum_{s',a'} \gamma_{\theta}(s',a'\mid s,a)\phi(s,a) \big( r(s,a,s')+\phi(s',a')^\top \theta -\phi(s,a)^\top\theta\big),
\end{align*}
where $\gamma_{\theta}(s',a'\mid s,a)$ denotes the number of times the transition $(s,a)\to (s',a')$ occurs in the sequence $X_{t+1}=(S_0^{(t+1)},A_0^{(t+1)},S_1^{(t+1)},A_1^{(t+1)},\ldots,A_{T^{(t+1)}-1}^{(t+1)},S_{T^{(t+1)}}^{(t+1)})$ (for $s'\in \mathcal S^+\setminus \mathcal S$ all but one of the $\gamma_{\theta}(s',a'\mid s,a)$ are 0). Let $\delta_{\theta}(s,a,s',a')$ be defined by
\begin{align*}
\delta_{\theta}(s,a,s',a') = r(s,a,s')+\phi(s',a')^\top \theta + \phi(s,a)^\top\theta, 
\end{align*}
so that 
\begin{align*}
H(\theta,X_{t+1})
= \sum_{s,a}\sum_{s',a'} \gamma_{\theta}(s',a'\mid s,a)\phi(s,a) \delta_{\theta}(s,a,s',a'). 
\end{align*}
Then
\begin{align*}
\lVert H(\theta,X_{t+1})\rVert &\leq \sum_{s,a}\sum_{s',a'} \gamma_{\theta}(s',a'\mid s,a) |\delta_{\theta}(s,a,s',a')|\lVert \phi(s,a)\rVert\\
&\leq \Phi_{\max}\sum_{s,a}\sum_{s',a'} \gamma_{\theta}(s',a'\mid s,a) |\delta_{\theta}(s,a,s',a')|, 
\end{align*}
where we have used Assumption \ref{Assumption:Phi}(ii). 
Furthermore, using Assumptions \ref{Assumption:Phi}(ii) and \ref{Assumption:rewards},
\begin{align*}
|\delta_{\theta}(s,a,s',a')|&\leq |r(s,a,s')| + (\lVert \phi(s',a')\rVert + \lVert \phi(s,a)\rVert)\lVert \theta \rVert\leq r_{\max}+2\Phi_{\max}\lVert \theta\rVert,
\end{align*}
hence
\begin{align*}
\lVert H(\theta,X_{t+1})\rVert &\leq \Phi_{\max}(r_{\max}+2\Phi_{\max}\lVert \theta\rVert)\sum_{s,a}\sum_{s',a'} \gamma_{\theta}(s',a'\mid s,a). 
\end{align*}
By using $(a+b)^2\leq 2(a^2+b^2)$ we obtain
\begin{align*}
\lVert H(\theta,X_{t+1})\rVert^2 \leq 2\Phi_{\max}^2(r_{\max}^2+4\Phi_{\max}^2\lVert \theta\rVert^2)\Big(\sum_{s,a}\sum_{s',a'} \gamma_{\theta}(s',a'\mid s,a)\Big)^2. 
\end{align*}
Let $\Gamma_{\theta} = \sum_{s,a}\sum_{s',a'} \gamma_{\theta}(s',a'\mid s,a)$ denote the number of steps until absorption. Then 
\begin{align*}
\E_{\theta}\big[ \lVert H(\theta,X_{t+1})\rVert^2\big] \leq 2\Phi_{\max}^2(r_{\max}^2+4\Phi_{\max}^2\lVert \theta\rVert^2)\E_{\theta}\big[\Gamma_{\theta}^2\big],
\end{align*}
where $\E_{\theta}[\cdot]$ denotes the expectation given parameter $\theta$ (and thus also given that policy $\pi_{\theta}$ is used). 
The expected number of steps before being absorbed, when starting in state $(s,a)$, is given by the $(s,a)$th element of $t_{\theta}$, denoted by $t_{\theta}(s,a)$, where $t_{\theta} = N_{\theta}\mathbf{1}$, and $N_{\theta} = (I-P_{\pi_{\theta}})^{-1}$, and the variance of the number of steps before being absorbed, when starting in state $(s,a)$, is given by the $(s,a)$th element of $(2N_{\theta}-I)t_{\theta}-t_{\theta}\odot t_{\theta}$, where $\odot$ denotes the Hadamard product, see e.g.\ \cite[Thm~3.3.5]{kemeny1976finite}.  Let $((2N_{\theta}-I)t_{\theta})_{(s,a)}$ denote the $(s,a)$th element of $(2N_{\theta}-I)t_{\theta}$. Then, 
\begin{align*}
\E_{\theta}\big[\Gamma_{\theta}^2\big] &= \E_{\theta}\big[ \E_{\theta}\big[ \Gamma_{\theta}^2 \mid S_0^{(t+1)},A_0^{(t+1)}\big] \big]\\
&= \E_{\theta} \big[ \Var_{\theta}\big(\Gamma_{\theta}\mid S_0^{(t+1)},A_0^{(t+1)}\big)\big] +\E_{\theta}\big[\E_{\theta}\big[ \Gamma_{\theta}\mid S_0^{(t+1)},A_0^{(t+1)}\big]^2\big]\\
&=\E_{\theta} \big[ ((2N_{\theta}-I)t_{\theta})_{(S_0^{(t+1)},A_0^{(t+1)})} -t_{\theta}(S_0^{(t+1)},A_0^{(t+1)})^2\big] + \E_{\theta}\big[t_{\theta}(S_0^{(t+1)},A_0^{(t+1)})^2\big]\\
&= \lambda_{\pi_{\theta}}^{\top}(2N_{\theta}-I)t_{\theta}\leq \lVert \lambda_{\pi_{\theta}}\rVert \lVert (2N_{\theta}-I)t_{\theta}\rVert 
\leq \lVert \lambda_{\pi_{\theta}}\rVert_1 \lVert 2N_{\theta}-I\rVert \lVert t_{\theta}\rVert\\
&\leq (2\lVert N_{\theta}\rVert+\lVert I\rVert)\lVert N_{\theta}\rVert \lVert \mathbf{1}\rVert \leq (2\lVert (I-P_{\pi_{\theta}})^{-1}\rVert +1)\lVert (I-P_{\pi_{\theta}})^{-1}\rVert\sqrt{|\mathcal S||\mathcal A|}\\
&\leq \sqrt{|\mathcal S||\mathcal A|}(2\zeta+1)\zeta. 
\end{align*} 
To conclude, 
\begin{align*}
\E_{\theta}\big[\lVert H(\theta,X_{t+1})\rVert^2\big] \leq 2\Phi_{\max}^2\sqrt{|\mathcal S||\mathcal A|}(2\zeta+1)\zeta(r_{\max}^2+4\Phi_{\max}^2\lVert \theta\rVert^2), 
\end{align*}
i.e.\ there exists a constant $K$ such that \eqref{eq:moment2} is satisfied. 

\subsubsection{Stability condition}\label{sec:stability}
First, note that 
\begin{align*}
h(\theta)&:=\E_{\theta}[H(\theta,X_{t+1})]
= \E_{\theta}\bigg[\sum_{u=0}^{T^{(t+1)}-1}\phi_u^{(t+1)}(r_u^{(t+1)}+(\phi_{u+1}^{(t+1)})^\top\theta - (\phi_u^{(t+1)})^\top\theta) \bigg]\\
&=\Phi^\top D_{\pi_{\theta}} r + \Phi^\top D_{\pi_{\theta}}(P_{\pi_{\theta}}-I)\Phi\theta = b_{\pi_{\theta}} + A_{\pi_{\theta}}\theta. 
\end{align*}
This can be shown similarly to the proof of Proposition 6.6 in Bertsekas \& Tsitsiklis \cite{bertsekas1996neuro}. The proof is included in Appendix \ref{app:h} for completeness. 

By Lemmas \ref{lem:fixed_point_F} and \ref{lem:F_pi_theta_fixed_point}, there 
exists $\theta^*$ such that $A_{\pi_{\theta^*}} \theta^* +b_{\pi_{\theta^*}} = 0$, hence
\begin{align*}
(\theta-\theta^*)^\top h(\theta) &= (\theta-\theta^*)^\top(A_{\pi_{\theta}}\theta + b_{\pi_{\theta}}) =  (\theta-\theta^*)^\top(A_{\pi_{\theta}}\theta - A_{\pi_{\theta^*}} \theta^* + b_{\pi_{\theta}} - b_{\pi_{\theta^*}})\\
&= (\theta-\theta^*)^\top A_{\pi_{\theta}} (\theta-\theta^*)+ (\theta-\theta^*)^\top(A_{\pi_{\theta}}-A_{\pi_{\theta^*}})\theta^* + (\theta-\theta^*)^\top(b_{\pi_{\theta}}-b_{\pi_{\theta^*}})\\
&\leq (\theta-\theta^*)^\top A_{\pi_{\theta}} (\theta-\theta^*) + \lVert \theta-\theta^*\rVert \lVert A_{\pi_{\theta}}-A_{\pi_{\theta^*}} \rVert \lVert\theta^*\rVert + \lVert \theta-\theta^*\rVert\lVert b_{\pi_{\theta}}-b_{\pi_{\theta^*}} \rVert.  
\end{align*}
By Lemma \ref{lem:Lipschitz_A_b} and Assumption \ref{Assumption:policy} we obtain
\begin{align*}
(\theta-\theta^*)^\top h(\theta) &\leq 
(\theta-\theta^*)^\top A_{\pi_{\theta}} (\theta-\theta^*) + C_1\lVert \theta-\theta^*\rVert^2 \lVert\theta^*\rVert + C_2\lVert \theta-\theta^*\rVert^2\\
& = (\theta-\theta^*)^\top (A_{\pi_{\theta}}+(C_1 \lVert\theta^*\rVert + C_2)I)(\theta-\theta^*). 
\end{align*}
By Lemma \ref{lem:neg_def} $A_{\pi_{\theta}}$ is negative definite. Hence, for $C_1$ and $C_2$ sufficiently small, $A_{\pi_{\theta}}+(C_1 \lVert\theta^*\rVert + C_2)I$ is negative definite, i.e.\ the stability condition \eqref{eq:stability2} is satisfied. 
\\

Hence, since the Robbins-Monro assumption \eqref{eq:robbins}, the square integrability condition \eqref{eq:moment2}, and the stability condition \eqref{eq:stability2} are satisfied, Theorem \ref{thm:convergence} follows.

\section{Discussion}\label{sec:Discussion}
We have shown that if the behaviour policy is $\varepsilon$-soft and Lipschitz continuous w.r.t.\ the weight vector, with small enough Lipschitz constant, then SARSA with linear function approximation will converge with probability one when considering a random horizon MDP. This is in line with earlier convergence results for infinite horizon discounted MDPs in \cite{perkins2003convergent,melo2008analysis}. 

For the variant of SARSA considered here, the weight vector and the behaviour policy are only updated at the end of each trajectory, not after each iteration. This variant of the algorithm should work well if the trajectories are not too long, but could cause slow convergence in practice in the case of very long trajectories. However, for a random horizon MDP with very long trajectories, it is possible that earlier results for infinite horizon MDPs will hold if replacing the stationary distribution of the Markov chain induced by a policy $\pi$ with a quasi-stationary distribution, at least for the discounted version of the problem. Obtaining convergence results for the online version of the algorithm, where the weight vector and policy are updated after each iteration, could still be of interest for problems with trajectories of medium length, i.e.\ too short for the existence of a quasi-stationary distribution, but long enough to cause slow convergence in practice. 
It is possible that Theorem 17, p.~239, in Benveniste et al.\ \cite{benveniste2012adaptive} could be used to prove convergence in this case, similarly to what is done in \cite{melo2008analysis}, but it would be more complex to ascertain if the various assumptions in this theorem are satisfied, since the Markov chain induced by a policy is not ergodic in the random horizon case. 

Furthermore, the theorem in this paper suffers from the same limitations as 
the theorems obtained in the infinite horizon discounted case, 
discussed in Perkins \& Precup \cite{perkins2003convergent} and Melo et al.\ \cite{melo2008analysis}. As described in \cite{perkins2003convergent}, the value of $C_0$ in Theorem 1 is not specified in the theorem, and depends on the properties of the MDP, which might be unknown (e.g.\ transition probabilities). Moreover, there is no guarantee on how close the approximation is to the true optimal action-value function and the true optimal policy. As discussed in \cite{melo2008analysis}, to approximate the true optimal action-value function and policy well, the behaviour policy over time needs to approach the greedy policy, by e.g.\ having a decaying exploration rate. This would, however, lead to an increased Lipschitz constant (since the greedy policy is discontinuous), hence the condition that the Lipschitz constant is sufficiently small might no longer hold.

\bibliographystyle{plain}
\bibliography{RL_conv_v1_nocomments}

\appendix

\section{Norms and norm inequalities} \label{app:norms}
The following norm definitions are used:
\begin{itemize}
\item For any vector $x\in\mathbb{R}^n$ 
\begin{align*}
&\lVert x\rVert=\sqrt{\sum_{i=1}^n x_i^2} \quad \text{(Euclidean norm)},\\
&\lVert x\rVert_1 = \sum_{i=1}^n|x_i|,\\
&\lVert x\rVert_{\infty} = \max_i \lvert x_i\rvert \quad \text{(infinity norm)}.
\end{align*} 
\item For any matrix $A\in\mathbb{R}^{m\times n}$
\begin{align*}
&\lVert A\rVert =
\sqrt{\lambda_{\max}(A^\top A)} \quad \text{(spectral norm)},\\
&\lVert A\rVert_{\infty} =\max_{i}\sum_{j=1}^n \lvert A_{i,j} \rvert  \quad  \text{(maximum absolute row sum norm)},\\
&\lVert A\rVert_{\max} =\max_{i,j}\lvert A_{i,j}\rvert \quad \text{(max norm)}. 
\end{align*}
\end{itemize}

We also use the following well known norm equivalences:
\begin{itemize}
\item For any vector $x\in\mathbb{R}^n$ 
\begin{align*}
&\lVert x\rVert_{\infty}\leq \lVert x\rVert \leq \lVert x\rVert_1.
\end{align*}
\item For any matrix $A\in\mathbb{R}^{m\times n}$
\begin{align*}
&\lVert A\rVert\leq \sqrt{mn}\lVert A\rVert_{\max},\\
&\lVert A\rVert\leq \sqrt{m}\lVert A\rVert_{\infty}.
\end{align*}
\end{itemize}

\section{Proofs of Lemmas \ref{lem:neg_def}, \ref{lem:fixed_point_F}, \ref{lem:F_quasi_contraction}-\ref{lem:F_pi_theta_fixed_point}, and \ref{lem:P_pi}} \label{app:profs}

\subsection{Proof of Lemma \ref{lem:neg_def}}
The proof of Lemma \ref{lem:neg_def} is identical to the last part of the proof of Lemma 6.10 in Bertsekas \& Tsitsiklis \cite{bertsekas1996neuro}, but here considering a Markov chain over state-action pairs, with $P_{\pi}$ ($P$ in \cite{bertsekas1996neuro}) and $D_{\pi}$ ($Q$ in \cite{bertsekas1996neuro}) defined accordingly, and the policy $\pi$ being $\varepsilon$-soft. 
\begin{proof}
For any $\varepsilon$-soft policy $\pi$, similarly to what is shown in the proof of Lemma \ref{lem:theta_pi}, but without requiring Assumption \ref{Assumption:contraction},
\begin{align*}
\lVert P_{\pi}q\rVert_{\eta_{\pi}}^2 &\leq 
\sum_{s',a'}q(s',a')^2(\eta_{\pi}(s',a')-\lambda_{\pi}(s',a'))
\leq\sum_{s',a'}q(s',a')^2\eta_{\pi}(s',a') = \lVert q\rVert_{\eta_{\pi}}^2, 
\end{align*}
for all $q\in\mathbb{R}^{|\mathcal S||\mathcal A|}$. Hence
\begin{align*}
q^\top D_{\pi} P_{\pi} q \leq \lVert q\rVert_{\eta_{\pi}}\lVert P_{\pi} q\rVert_{\eta_{\pi}}\leq \lVert q\rVert_{\eta_{\pi}}^2 = q^\top D_{\pi} q. 
\end{align*}
For this inequality to be an equality, we need $q$ and $P_{\pi}q$ to be colinear, and $\lVert P_{\pi} q\rVert_{\eta_{\pi}}=\lVert q\rVert_{\eta_{\pi}}$. Thus, the inequality is strict unless $P_{\pi} q= q$ or $P_{\pi} q= -q$, which means that $P_{\pi}^{2m}q = q$ for all $m\geq 0$. Since the policy is proper, $P_{\pi}^{2m}$ converges to zero, hence we must have $q=0$, i.e.\ the inequality is strict for all $q\neq 0$. Hence, the matrix $D_{\pi}(P_{\pi}-I)$ is negative definite. Using Assumption \ref{Assumption:Phi}(i) (linear independence of columns of $\Phi$) $A_{\pi}$ is negative definite. 
\end{proof}

\subsection{Proof of Lemma \ref{lem:fixed_point_F}}
The proof of Lemma \ref{lem:fixed_point_F} is identical to the proof of Lemma 5.3 in de Farias \& Van Roy \cite{de2000existence}, but using our definitions of $s_{\pi_{\theta}}$, $F_{\pi_{\theta}}^{\alpha}$, $T_{\pi_{\theta}}$ and $D_{\pi_{\theta}}$ ($s_{\delta}$, $F_{\delta}^{\gamma}$, $T_{\delta}$, and $D_{\mu_r^{\delta}}$ in \cite{de2000existence}), and that the policy $\pi$ is $\varepsilon$-soft. We also use that $D_{\pi}$ is positive definite for any $\varepsilon$-soft policy $\pi$ under Assumption \ref{Assumption:states}.

\begin{proof}
Let $\theta$ be a fixed point of $F_{\pi}^{\alpha}$. Then $s_{\pi}(\theta)=0$, hence
\begin{align*}
\Phi^\top D_{\pi}\Phi\theta&=\Phi^\top D_{\pi}(r+P_{\pi}\Phi\theta), \\
\Phi(\Phi^\top D_{\pi}\Phi)^{-1}\Phi^\top D_{\pi}\Phi\theta&=\Phi(\Phi^\top D_{\pi}\Phi)^{-1}\Phi^\top D_{\pi}(r+P_{\pi}\Phi\theta), \\
\Phi\theta &= \Pi_{\pi}T_{\pi}\Phi\theta,
\end{align*}
and $\Phi\theta$ is a fixed point of $H_{\pi}$.
The reverse can be shown by reversing the steps, and an entirely analogous argument can be used to show that $\theta$ is a fixed point of $F_{\pi_{\theta}}^{\alpha}$ if and only if $\Phi\theta$ is a fixed point of $H_{\pi_{\theta}}$. 
\end{proof}

\subsection{Proof of Lemma \ref{lem:F_quasi_contraction}}
The proof of Lemma \ref{lem:F_quasi_contraction} is essentially identical to the proof of Lemma 5.4 in de Farias \& Van Roy \cite{de2000existence}, but using our definition of $H_{\pi}$ and using a $\varepsilon$-soft policy $\pi$. 

\begin{proof}
We have
\begin{align*}
\lVert F_{\pi}^{\alpha}(\theta)-\theta_{\pi}\rVert^2&=\lVert \theta+\alpha s_{\pi}(\theta)-\theta_{\pi}\rVert^2
= \lVert \theta-\theta_{\pi}\rVert^2 + 2\alpha(\theta-\theta_{\pi})^\top s_{\pi}(\theta) + \alpha^2\lVert s_{\pi}(\theta)\rVert^2. 
\end{align*}
For the second term, note that (using Lemma \ref{lem:theta_pi} and that $H_{\pi}$ is a contraction) for all $\varepsilon$-soft $\pi$, there exists $\beta\in[0,1)$ such that
\begin{align} \label{eq:H_pi}
\lVert H_{\pi}\Phi\theta-\Phi \theta_{\pi}\rVert_{\eta_{\pi}} = \lVert H_{\pi}\Phi\theta-H_{\pi}\Phi \theta_{\pi}\rVert_{\eta_{\pi}}\leq \beta\lVert\Phi\theta-\Phi\theta_{\pi}\rVert_{\eta_{\pi}},
\end{align}
and furthermore that
\begin{align*}
(\theta-\theta_{\pi})^\top s_{\pi}(\theta) &= (\theta-\theta_{\pi})^\top\Phi^\top D_{\pi}(T_{\pi}\Phi\theta - \Phi\theta) \\
&= (\theta-\theta_{\pi})^\top \Phi^\top D_{\pi}\Phi (\Phi^\top D_{\pi}\Phi)^{-1}\Phi^\top D_{\pi}(T_{\pi}\Phi\theta-\Phi\theta)\\
&= (\Phi\theta-\Phi\theta_{\pi})^\top D_{\pi} (\Phi (\Phi^\top D_{\pi}\Phi)^{-1}\Phi^\top D_{\pi}T_{\pi}\Phi\theta -\Phi\theta) \\
&= (\Phi\theta-\Phi\theta_{\pi})^\top D_{\pi} (H_{\pi}\Phi\theta - \Phi\theta) = \langle \Phi\theta-\Phi\theta_{\pi}, H_{\pi}\Phi\theta - \Phi\theta\rangle_{\eta_{\pi}},
\end{align*}
where $\langle \cdot,\cdot\rangle_{\eta_{\pi}}$ denotes the weighted inner product, i.e.\ $\langle x,y\rangle_{\eta_{\pi}}=x^\top D_{\pi} y$. Now, using \eqref{eq:H_pi},
\begin{align*}
\langle \Phi\theta-\Phi\theta_{\pi}, H_{\pi}\Phi\theta - \Phi\theta\rangle_{\eta_{\pi}} &= \langle \Phi\theta-\Phi\theta_{\pi}, (H_{\pi}\Phi\theta -\Phi\theta_{\pi}) -(\Phi\theta_{\pi} - \Phi\theta) \rangle_{\eta_{\pi}}\\
&=  \langle \Phi\theta-\Phi\theta_{\pi}, H_{\pi}\Phi\theta -\Phi\theta_{\pi}\rangle_{\eta_{\pi}} -\lVert \Phi\theta - \Phi\theta_{\pi}\rVert_{\eta_{\pi}}^2\\
&\leq \lVert \Phi\theta - \Phi\theta_{\pi}\rVert_{\eta_{\pi}} \lVert H_{\pi}\Phi\theta -\Phi\theta_{\pi} \rVert_{\eta_{\pi}} - \lVert \Phi\theta - \Phi\theta_{\pi}\rVert_{\eta_{\pi}}^2\\
&\leq (\beta-1)\lVert \Phi\theta - \Phi\theta_{\pi}\rVert_{\eta_{\pi}}^2 = (\beta-1)(\theta-\theta_{\pi})^\top \Phi^\top D_{\pi}\Phi(\theta-\theta_{\pi}). 
\end{align*}
Hence, 
\begin{align*}
(\theta-\theta_{\pi})^\top s_{\pi}(\theta) \leq (\beta-1)\lVert \theta-\theta_{\pi}\rVert\lVert  \Phi^\top D_{\pi}\Phi(\theta-\theta_{\pi})\rVert \leq (\beta-1)\lVert  \Phi^\top D_{\pi}\Phi\rVert\lVert \theta-\theta_{\pi}\rVert^2. 
\end{align*}
Since $D_{\pi}$ is a positive definite matrix for all $\varepsilon$-soft policies $\pi$, $\Phi^\top D_{\pi}\Phi$ is positive definite and symmetric, hence $\lVert  \Phi^\top D_{\pi}\Phi\rVert>0$. It follows that there exists a constant $C_1>0$ such that
\begin{align*}
(\theta-\theta_{\pi})^\top s_{\pi}(\theta)\leq -C_1\lVert \theta -\theta_{\pi}\rVert^2, 
\end{align*}
namely $C_1 = (1-\beta)\max_{\pi\in\Delta_{\varepsilon}}\lVert \Phi^\top D_{\pi}\Phi\rVert$, where the maximum is attained since (by Lemma \ref{lem:eta_cont_pi}) $D_{\pi}$ is a continuous function of $\pi$, and the set of all $\varepsilon$-soft policies is compact. 

Note that $\Phi^\top D_{\pi}\Pi_{\pi} = \Phi^\top D_{\pi}\Phi (\Phi^\top D_{\pi}\Phi)^{-1}\Phi^\top D_{\pi}=\Phi^\top D_{\pi}$. Furthermore, let $\phi_i$ be the $i$th column of $\Phi$.
Then
\begin{align*}
\lVert s_{\pi}(\theta)\rVert^2 &= \lVert \Phi^\top D_{\pi}(T_{\pi}\Phi\theta-\Phi\theta)\rVert^2 = \sum_{i=1}^d\big(\phi_i^\top D_{\pi}(T_{\pi}\Phi\theta-\Phi\theta)\big)^2\\
&=  \sum_{i=1}^d\big(\phi_i^\top D_{\pi}(\Pi_{\pi}T_{\pi}\Phi\theta-\Phi\theta)\big)^2
\leq \sum_{i=1}^d \lVert \phi_i\rVert_{\eta_{\pi}}^2\lVert \Pi_{\pi}T_{\pi}\Phi\theta- \Phi \theta\rVert_{\eta_{\pi}}^2\\
&\leq \sum_{i=1}^d \lVert \phi_i\rVert_{\eta_{\pi}}^2(\lVert \Pi_{\pi}T_{\pi}\Phi\theta- \Phi \theta_{\pi}\rVert_{\eta_{\pi}} + \lVert \Phi\theta_{\pi}- \Phi \theta\rVert_{\eta_{\pi}})^2\\
&\leq \sum_{i=1}^d \lVert \phi_i\rVert_{\eta_{\pi}}^2(\beta \lVert \Phi\theta-\Phi\theta_{\pi}\rVert_{\eta_{\pi}}+ \lVert \Phi\theta_{\pi}- \Phi \theta\rVert_{\eta_{\pi}})^2\\
&=(\beta+1)^2\sum_{i=1}^d \lVert \phi_i\rVert_{\eta_{\pi}}^2\lVert \Phi\theta_{\pi}-\Phi\theta\rVert_{\eta_{\pi}}^2, 
\end{align*}
and (similarly to above) it follows that there exists a constant $C_2>0$ (independent of $\pi$) such that $\lVert s_{\pi}(\theta)\rVert^2\leq C_2\lVert \theta-\theta_{\pi}\rVert^2$.
Hence
\begin{align*}
\lVert F_{\pi}^{\alpha}(\theta)-\theta_{\pi}\rVert^2&\leq
 \lVert \theta-\theta_{\pi}\rVert^2 + 2\alpha(\theta-\theta_{\pi})^\top s_{\pi}(\theta) + \alpha^2\lVert s_{\pi}(\theta)\rVert^2\\
 &\leq (1-2\alpha C_1+\alpha^2C_2)\lVert \theta-\theta_{\pi}\rVert^2.
\end{align*}
Thus, with $\alpha^*=2C_1/C_2$ (independent of $\pi$) and $\alpha\in(0,\alpha^*)$, we see that 
\begin{align*}
1-2\alpha C_1+\alpha^2C_2=1+C_2 \alpha(\alpha-\alpha^*)<1, 
\end{align*}
i.e.\ there exists $\beta_{\alpha}\in(0,1)$ such that
\begin{align*}
\lVert F_{\pi}^{\alpha}(\theta)-\theta_{\pi}\rVert^2\leq \beta_{\alpha}\lVert \theta-\theta_{\pi}\rVert^2.
\end{align*}
\end{proof}

\subsection{Proof of Lemma \ref{lem:F_pi_theta_fixed_point}}
The proof of Lemma \ref{lem:F_pi_theta_fixed_point} is essentially identical to the proof of Lemma 5.5 in de Farias \& Van Roy \cite{de2000existence}, but using our definitions of $F_{\pi_{\theta}}^{\alpha}$ ($F_{\delta}^{\gamma}$ in \cite{de2000existence}). Furthermore, \cite{de2000existence} use that the set of all stochastic policies is compact, we instead use that the set of $\varepsilon$-soft policies is compact. Moreover, in our case it is the Lipschitz continuity of $\pi_{\theta}$ that implies that $F_{\pi_{\theta}}^{\alpha}$ is continuous in $\theta$, rather than the specific choice of behaviour policy (softmax policy) considered in \cite{de2000existence}.

\begin{proof}
By Lemma \ref{lem:theta_pi_cont}, $\theta_{\pi}$ is a continuous function of $\pi$. Since the set of $\varepsilon$-soft policies, $\Delta_{\varepsilon}$, is compact, 
the set $\Theta = \{ \theta_{\pi}:\pi\in\Delta_{\varepsilon}\}$ is also compact. Let $\Theta_{\max} = \max\{\lVert \theta \rVert:\theta\in\Theta\}$. 

Note that if we establish that a fixed point exists for some $\alpha>0$, then by Lemma \ref{lem:fixed_point_F} this fixed point is also a fixed point for all other $\alpha>0$. Using Lemma \ref{lem:F_quasi_contraction}, we can choose $\alpha>0$ such that there is a $\beta\in(0,1)$ with
\begin{align*}
\lVert F_{\pi}^\alpha(\theta)-\theta_{\pi}\rVert\leq\beta\lVert\theta -\theta_{\pi}\rVert,
\end{align*}
for all $\varepsilon$-soft $\pi$. Then
\begin{align*}
\lVert F_{\pi_{\theta}}^\alpha(\theta)\rVert\leq \lVert F_{\pi_{\theta}}^\alpha(\theta)-\theta_{\pi_{\theta}}\rVert + \lVert \theta_{\pi_{\theta}}\rVert 
\leq \beta\lVert\theta -\theta_{\pi_{\theta}}\rVert +\Theta_{\max}\leq \beta\lVert \theta\rVert+(\beta+1)\Theta_{\max}. 
\end{align*}
Hence, the set $\bar\Theta = \{\theta:\lVert\theta\rVert\leq (1+\beta)\Theta_{\max}/(1-\beta)\}$ is closed under $F_{\pi_{\theta}}^{\alpha}$, since, if $\theta\in\bar\Theta$, then by the above
\begin{align*}
\lVert F_{\pi_{\theta}}^{\alpha}(\theta)\rVert\leq\beta\lVert \theta\rVert+(\beta+1)\Theta_{\max}\leq\beta\frac{1+\beta}{1-\beta}\Theta_{\max}+(1+\beta)\Theta_{\max}=\frac{1+\beta}{1-\beta}\Theta_{\max},
\end{align*}
i.e.\ $F_{\pi_{\theta}}^{\alpha}(\theta)\in\bar\Theta$. Using this, and that $F_{\pi_{\theta}}^{\alpha}$ is a continuous function of $\theta$ (since $\pi_{\theta}$ is Lipschitz continuous w.r.t.\ $\theta$, $P_{\pi}$ is a continuous function of $\pi$, $D_{\pi}$ is a continuous function of $\pi$ by Lemma \ref{lem:eta_cont_pi}, hence $T_{\pi_{\theta}}$ is a continuous function of $\theta$, which implies that $F_{\pi_{\theta}}^{\alpha}$ is a continuous function of $\theta$), by the Brouwer fixed point theorem $F_{\pi_{\theta}}^{\alpha}$ possesses a fixed point. 
\end{proof}

\subsection{Proof of Lemma \ref{lem:P_pi}}
The proof of Lemma \ref{lem:P_pi} is essentially identical to the proof of Lemma 1 in Perkins \& Precup \cite{perkins2003convergent}, but using our definition of $P_{\pi}$, and correcting what appears to be a minor error in the proof, namely that $\lVert A \rVert \leq n \lVert A\rVert_{\max}$ for $A\in\mathbb{R}^{n\times n}$, rather than $\lVert A \rVert \leq \sqrt{n} \lVert A\rVert_{\max}$. 

\begin{proof}
Let $\pi_1$ and $\pi_2$ be fixed, and let $i$ and $j$ correspond to the $(s,a)$th row and $(s',a')$th column of $P_{\pi}$, respectively. Then
\begin{align*}
\lVert P_{\pi_1}-P_{\pi_2}\rVert&\leq \sqrt{(|\mathcal S||\mathcal A|)^2}\lVert P_{\pi_1}-P_{\pi_2}\rVert_{\max} = |\mathcal S||\mathcal A|\max_{i,j}\lvert (P_{\pi_1})_{i,j}-(P_{\pi_2})_{i,j}\rvert\\
&=|\mathcal S||\mathcal A|\max_{s,a,s',a'}\lvert p(s'|s,a)(\pi_{1}(a'|s') - \pi_{2}(a'|s') )\rvert \leq |\mathcal S||\mathcal A|\max_{s',a'}\lvert \pi_{1}(a'|s') - \pi_{2}(a'|s') \rvert\\
&=|\mathcal S||\mathcal A|\lVert \pi_1-\pi_2\rVert_{\infty}\leq |\mathcal S||\mathcal A| \lVert \pi_1-\pi_2\rVert,
\end{align*}
i.e.\ $C_P=|\mathcal S||\mathcal A|$.
\end{proof}

\section{Proof that $h(\theta)=b_{\pi_{\theta}} + A_{\pi_{\theta}}\theta$} \label{app:h}

Using the convention $\phi_u^{(t)}=0$ for $u\geq T^{(t)}$,
\begin{align*}
\E_{\theta}\bigg[\sum_{u=0}^{T^{(t+1)}-1}\phi_u^{(t+1)}(\phi_{u}^{(t+1)})^\top \bigg]
&=\E_{\theta}\bigg[\sum_{u=0}^{\infty}\phi(S_u^{(t+1)},A_u^{(t+1)})\phi(S_u^{(t+1)},A_u^{(t+1)})^\top \bigg]\\
&=\sum_{u=0}^{\infty}\E_{\theta} [\phi(S_u^{(t+1)},A_u^{(t+1)})\phi(S_u^{(t+1)},A_u^{(t+1)})^\top ]\\
&=\sum_{u=0}^{\infty}\sum_{s\in\mathcal{S},a\in\mathcal{A}} P_{\pi_{\theta}}(S_u^{(t+1)}=s,A_u^{(t+1)}=a)\phi(s,a)\phi(s,a)^\top\\
&=\sum_{s\in\mathcal{S},a\in\mathcal{A}}\eta_{\pi_{\theta}}(s,a)\phi(s,a)\phi(s,a)^\top = \Phi^\top D_{\pi_{\theta}} \Phi,
\end{align*}
\begin{align*}
\E_{\theta}&\bigg[\sum_{u=0}^{T^{(t+1)}-1}\phi_u^{(t+1)}(\phi_{u+1}^{(t+1)})^\top \bigg]
=\E_{\theta}\bigg[\sum_{u=0}^{\infty}\phi(S_u^{(t+1)},A_u^{(t+1)})\phi(S_{u+1}^{(t+1)},A_{u+1}^{(t+1)})^\top \bigg]\\
&=\sum_{u=0}^{\infty}\E_{\theta} [\phi(S_u^{(t+1)},A_u^{(t+1)})\phi(S_{u+1}^{(t+1)},A_{u+1}^{(t+1)})^\top ]\\
 &=\sum_{u=0}^{\infty}\sum_{s,s'\in\mathcal{S},a,a'\in\mathcal{A}} P_{\pi_{\theta}}(S_u^{(t+1)}=s,A_u^{(t+1)}=a)p_{\pi_{\theta}}(s',a'\mid s,a)\phi(s,a)\phi(s',a')^\top\\
 & =\sum_{s,s'\in\mathcal{S},a,a'\in\mathcal{A}} \eta_{\pi_{\theta}}(s,a)p_{\pi_{\theta}}(s',a'\mid s,a)\phi(s,a)\phi(s',a')^\top = \Phi^\top D_{\pi_{\theta}} P_{\pi_{\theta}} \Phi,
\end{align*}
and 
\begin{align*}
\E_{\theta}&\bigg[\sum_{u=0}^{T^{(t+1)}-1}\phi_u^{(t+1)} r_u^{(t+1)} \bigg]
= \E_{\theta}\bigg[\sum_{u=0}^{\infty}\phi(S_u^{(t+1)},A_u^{(t+1)}) r(S_{u}^{(t+1)},A_{u}^{(t+1)},S_{u+1}^{(t+1)})^\top \bigg]\\
&=\sum_{u=0}^{\infty}\sum_{s\in\mathcal{S},a\in\mathcal{A},s'\in\mathcal{S}^+} P_{\pi_{\theta}}(S_u^{(t+1)}=s,A_u^{(t+1)}=a) p(s'\mid s,a)\phi(s,a) r(s,a,s')\\
& = \sum_{s\in\mathcal{S},a\in\mathcal{A}}\sum_{u=0}^\infty P_{\pi_{\theta}}(S_u^{(t+1)}=s,A_u^{(t+1)}=a)\phi(s,a)\sum_{s'\in\mathcal{S}^+} p(s'\mid s,a)r(s,a,s')\\
&= \sum_{s\in\mathcal{S},a\in\mathcal{A}} \eta_{\pi_{\theta}}(s,a)\phi(s,a) r(s,a) = \Phi^\top D_{\pi_{\theta}} r.
\end{align*}
Thus
\begin{align*}
h(\theta)=\Phi^\top D_{\pi_{\theta}} r + \Phi^\top D_{\pi_{\theta}}(P_{\pi_{\theta}}-I)\Phi\theta = b_{\pi_{\theta}} + A_{\pi_{\theta}}\theta. 
\end{align*}

\end{document}